%% file: paper.tex
\documentclass{teleai}
\usepackage{natbib,amssymb}
\usepackage[T1]{fontenc}
\usepackage{adjustbox}
\usepackage{amsmath,amsfonts,bm}

\def\eqref#1{equation~\ref{#1}}
\def\1{\bm{1}}

\DeclareMathAlphabet{\mathsfit}{\encodingdefault}{\sfdefault}{m}{sl}
\SetMathAlphabet{\mathsfit}{bold}{\encodingdefault}{\sfdefault}{bx}{n}

\usepackage{graphicx,booktabs,tabularx}
\usepackage{amsthm}
\usepackage[nopatch=eqnum]{microtype}
\usepackage{enumitem}
\usepackage{float}
\usepackage{xurl}
\usepackage{marvosym}

\floatstyle{ruled}
\newfloat{algorithm}{tbp}{loa}
\floatname{algorithm}{Algorithm}
\newtheorem{proposition}{Proposition}
\newcommand{\method}{UniWorld}
\newcommand{\concat}{\operatorname{concat}}
\newcommand{\readdepth}{\operatorname{ReadDepth}}
\newcommand{\proj}{\mathcal{P}}
\newcommand{\miB}{\,\mathrm{MiB}}

\hypersetup{pdfauthor={Xin Zhang, Yabo Chen, Shuaiwei Wang, Zixuan Duan, Haibin Huang, Feng Xu, Xuelong Li},pdftitle={UniWorld: A Unified World Model for Real-Time Visual World Simulation}}
\title{UniWorld: A Unified World Model for Real-Time Visual World Simulation}

\newcommand{\equalcontribmark}{\textsuperscript{*}}
\newcommand{\correspondingmark}{\textsuperscript{\Letter}}
\author[1,2]{Xin Zhang\equalcontribmark\,}
\author[2]{Yabo Chen\equalcontribmark\,}
\author[2,3]{Shuaiwei Wang}
\author[2,4]{Zixuan Duan}
\author[2]{Haibin Huang}
\author[1]{Feng~Xu}
\author[2]{Xuelong Li\correspondingmark\,}

\affiliation[1]{Fudan University}
\affiliation[2]{Institute of Artificial Intelligence (TeleAI), China Telecom}
\affiliation[3]{Jilin University}
\affiliation[4]{Nanjing University}
\contribution{\textsuperscript{*}Equal contributions. \correspondingmark Corresponding authors.}
\metadata[Keywords]{\parbox[t]{\dimexpr\linewidth-4.2cm\relax}{World models; Video generation; Depth conditioning;\\Continuous visual simulation}}

\abstract{Continuous visual world simulation requires coupling evolving scene geometry with temporally persistent appearance. We present UniWorld, a unified framework that integrates depth trajectories and reference appearance within a shared video generation model. The central design maintains visual and geometric histories on a common timeline. Noise-aligned overlapping continuation places generated history at the prescribed sampling noise level and commits only newly generated latents. Rolling depth latent state preserves the corresponding geometric controls, incrementally reads newly available geometry, and keeps the conditioning fixed within each sampling window. Together, these mechanisms ensure exact preservation of latent history while bounding the active depth state by the window size. We evaluate continuous generation, paired history interventions, and long-horizon behavior across diverse scene categories. The same generator supports estimated, native, and on-demand simulator depth, while cached and incremental execution paths produce bitwise-identical outputs. Appearance changes and geometric control branches operate through the same interface without modifying committed history. The measured steady-state DiT execution path achieves real-time generation throughput, demonstrating the efficiency of UniWorld for continuous visual world simulation. Project page: \url{https://zephipage.github.io/UniWorld}.}

\begin{document}
\maketitle

\begin{figure}[htbp]
  \centering
  \includegraphics[width=\linewidth,pagebox=mediabox]{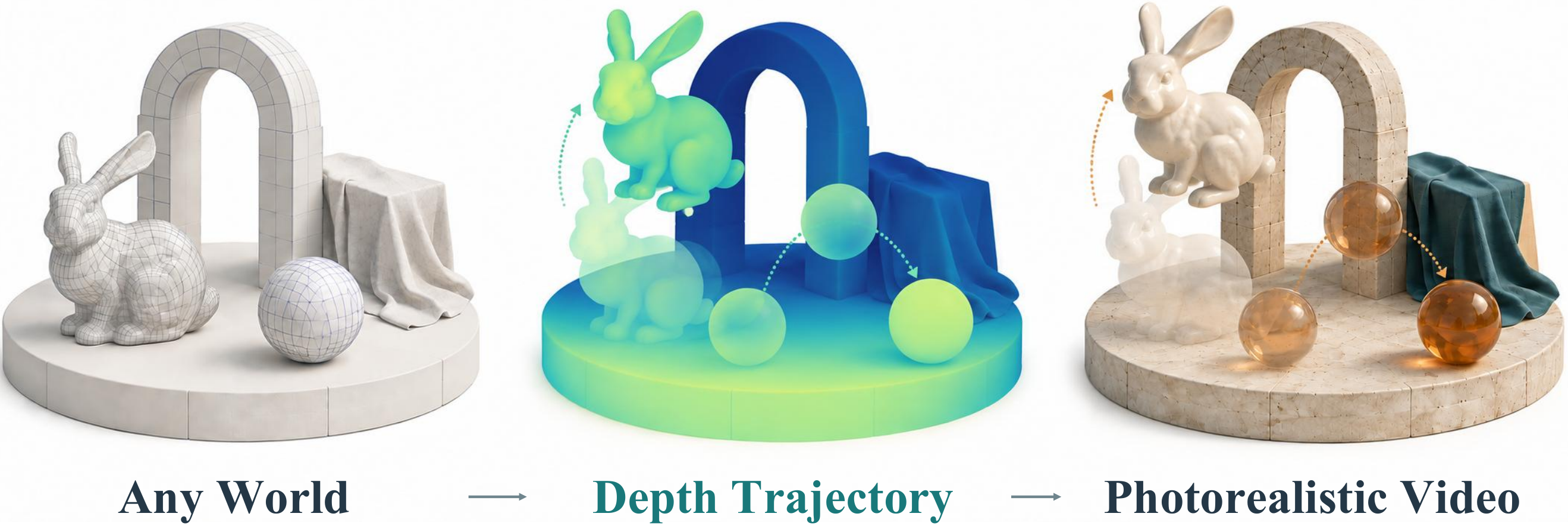}
  \caption{\textbf{UniWorld: from world geometry to continuous visual observations.}
  Conceptual overview: depth trajectories describe geometric evolution, while reference RGB specifies appearance for a shared video renderer.}
  \label{fig:teaser}
\end{figure}
\section{Introduction}
\label{sec:introduction}

World models connect observations with environment dynamics, supporting simulation, prediction, and decision-making~\citep{ha2018world,bruce2024genie,zhu2026astra}. Graphics engines and physical simulators provide structured and controllable scene states, whereas video generators provide rich visual appearance. Bridging these capabilities requires a renderer that can continuously transform evolving geometry into temporally consistent visual observations. We focus on this simulator-to-renderer component: an upstream world source specifies a geometric trajectory, and a generative model produces the corresponding visual realization.

Different world sources expose distinct state variables, camera models, and control interfaces, which makes direct coupling to a shared visual generator difficult. We observe that the visible geometry produced by these sources can instead be represented in a common image-space form: depth. Image-aligned depth captures scene layout, occlusion, and geometric changes, while a reference RGB image specifies texture, color, and appearance semantics. This separation maps heterogeneous world states to a common rendering interface, enabling the same generator to operate on recorded scenes, simulator trajectories, and on-demand geometric inputs.

Continuous rendering introduces a second challenge: generation must remain consistent across finite sampling windows. Long-video methods extend temporal context, use overlapping windows, or compress history~\citep{henschel2024streaming,zhang2025framepack,gao2025longvie}, while autoregressive approaches condition future generation on earlier observations~\citep{huang2025selfforcing,zhu2026astra}. Depth-driven rendering must additionally preserve the geometric conditions associated with those observations. Continuous generation therefore requires two synchronized histories: the generated visual state and the depth state that produced it. Previously generated observations must remain fixed, the associated depth values must remain unchanged, and only newly requested geometry should produce newly committed outputs. A shared temporal contract makes these state transitions explicit and testable.

We propose \method{}, a unified generative rendering framework that maintains visual and geometric states on a common timeline. \emph{Noise-aligned overlapping continuation} places previously generated latents in the prefix of the next window and constrains them to the corresponding flow time throughout sampling. \emph{Rolling depth latent state} preserves the matching geometric history, incrementally reads only newly available depth positions, and keeps the conditioning fixed within each window. After sampling, the system commits only the newly generated video suffix. The sampling window can therefore advance without rewriting committed latents or altering the controls associated with the overlapping region.

This design bounds the active state and per-window computation independently of the total sequence duration, thereby enabling continuous long-horizon rendering. We verify the state contract through actual model executions and evaluate \method{} across four scene categories using estimated and native depth, controlled fixed-window interventions, and extended trajectories. An on-demand engine further demonstrates reference-appearance changes and branch-conditioned continuation through the same interface. The measured steady-state DiT execution path achieves real-time rendering throughput while maintaining a bounded active state during continuous generation.

Our contributions are:
\begin{enumerate}[leftmargin=*,itemsep=2pt,topsep=3pt]
  \item \textbf{UniWorld.} A unified visual simulation framework that represents heterogeneous geometric trajectories using image-aligned depth and combines them with reference appearance, providing a common generative rendering interface for recorded scenes, simulators, and on-demand world trajectories.
  \item \textbf{Noise-aligned overlapping continuation.} A sampling rule that constrains a generated multi-frame prefix at every flow time and commits only the newly generated suffix, thereby preserving previously generated latents while retaining temporal context for continued generation.
  \item \textbf{Rolling depth latent state.} A synchronized geometric state that incrementally reads only newly available depth positions, preserves the conditions associated with committed observations, and maintains an active-state bound that is independent of sequence duration.
\end{enumerate}

\section{Related Work}
\label{sec:related}

\subsection{Visual World Models}
World Models learn compact representations of environment dynamics~\citep{ha2018world}, and Genie learns interactive visual environments from video~\citep{bruce2024genie}. Astra combines autoregressive denoising, noisy histories, and action adapters to support exploration, manipulation, and driving~\citep{zhu2026astra}. These approaches link observations to actions or latent controls. \method{} complements these approaches by providing a shared visual rendering interface: an upstream simulator supplies the evolving geometry, and the generator continuously renders the corresponding appearance.

\subsection{Conditional Video Generation}
Large video models support conditioning on text and images~\citep{wan2025wan,minimax2026h3}. ControlNet, SparseCtrl, and CameraCtrl introduce structural or camera constraints~\citep{zhang2023controlnet,guo2023sparsectrl,he2024cameractrl}. LongVie and LongVie 2 combine multiple control signals for long-video synthesis~\citep{gao2025longvie,gao2025longvie2}. Our implementation builds on H3, Video DeltaNet (VDN), Qwen3-VL, and the VideoX-Fun Union control branch~\citep{minimax2026h3,xi2026vdn,bai2025qwen3vl,aigcapps2026videox}. The original image provides persistent semantic context, while a generated boundary image and temporally aligned depth serve as local conditioning inputs. Our modifications focus on how these conditioning inputs and the generated history are retained across windows.

\subsection{Long-Video Generation}
StreamingT2V combines short- and long-term appearance memory~\citep{henschel2024streaming}, and FramePack compresses frame context~\citep{zhang2025framepack}. Diffusion Forcing uses per-token noise levels during training, and Self Forcing addresses the mismatch between training histories and model-generated histories~\citep{chen2024diffusionforcing,huang2025selfforcing}. RePaint conditions diffusion sampling on known regions by replacing noisy observations~\citep{lugmayr2022repaint}. LongVie uses unified noise initialization and global control normalization~\citep{gao2025longvie}. Building on these principles, we focus on a joint temporal contract for a flow renderer that uses eight evaluations. Under this contract, the multi-frame target prefix is matched to the current sampling noise level, canonical latent values are preserved in overlapping depth segments, and incremental output is aligned with the global VAE phase.

\section{Approach}
\label{sec:method}

\subsection{Task Definition and Preliminaries}
\label{sec:task}
The method takes as input a single reference image $I_{\mathrm{ref}}$, optional text $p$, and a spatially and temporally aligned depth sequence $D$, and outputs a video $\hat V$ at playback rate $f$. The target generation rate for real-time operation is 14 FPS. For subsequent windows, boundary images are obtained from previously generated content. All temporal intervals use zero-based, half-open indexing. Let $N$ and $F$ denote the aligned temporal lengths in pixel space and latent space, respectively. Window $i$ spans $\Omega_i=[a_i,b_i)$ in latent time, with length $L_i=b_i-a_i$, inherited history $H_i$ of length $h_i$, depth condition $C_i$, boundary image $I_i$, and a slice $\epsilon_i$ of the global noise. The first window has $h_0=0$.
\begin{equation}
 c_{\mathrm g}=E_{\mathrm{Qwen}}(I_{\mathrm{ref}},p),\qquad
 \hat Z_i=G_\theta^{(K)}(H_i,C_i,I_i,c_{\mathrm g},\epsilon_i),\qquad K=8.
 \label{eq:generator}
\end{equation}
Here $G_\theta^{(K)}$ includes the projection and sampler described below. The windows are generated sequentially because each window depends on previously generated history, whereas generation within each window remains bidirectional.

\subsection{UniWorld: A Unified Framework for Visual World Simulation}
\label{sec:framework}
\begin{figure}[t]
  \centering
  \includegraphics[width=\linewidth]{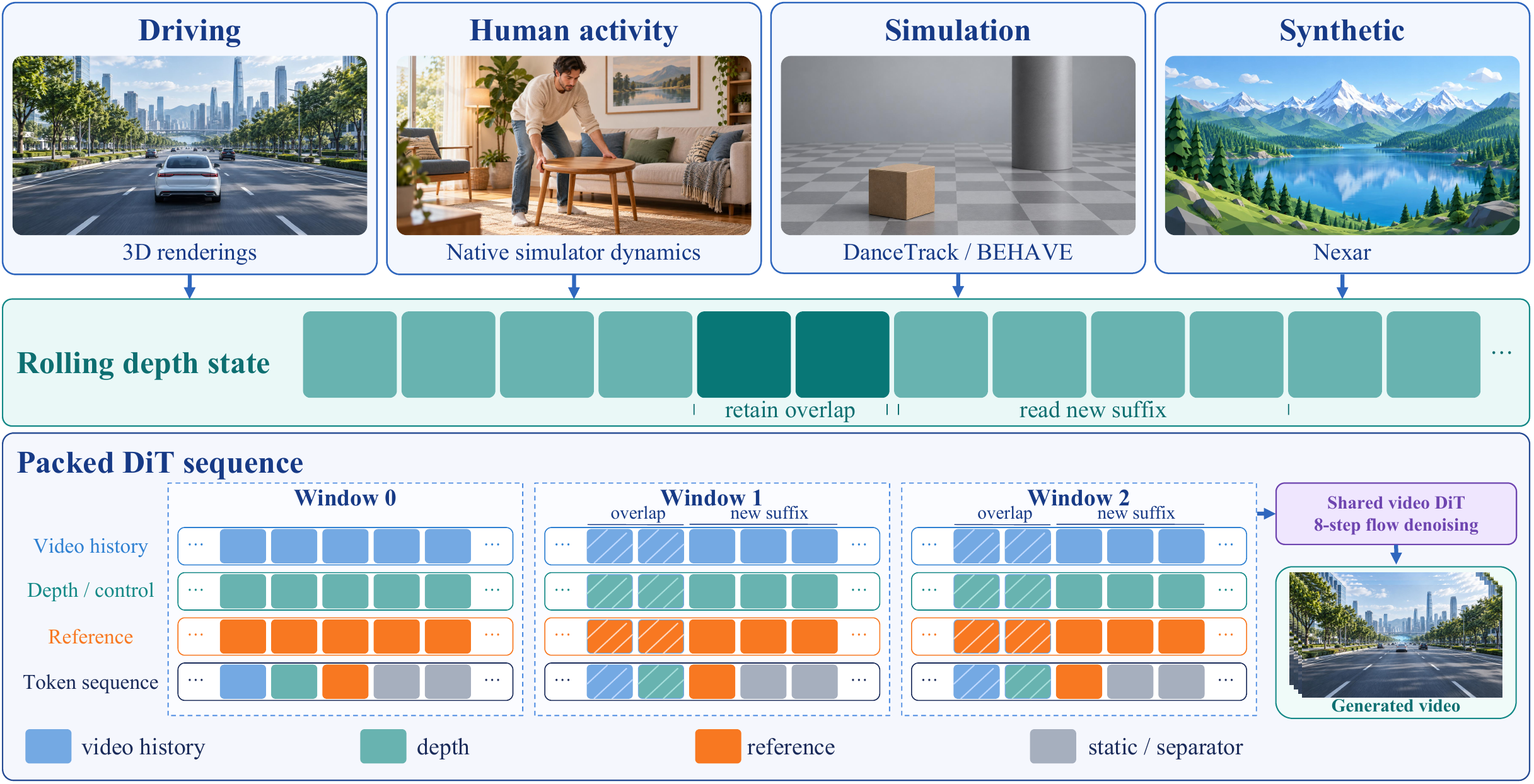}
  \caption{\textbf{UniWorld: a unified renderer for heterogeneous world sources.} Driving, human-activity, simulation, and synthetic sources are converted into a common rolling depth state. Each continuation window packs video history, depth/control, reference, and separator tokens into a shared DiT sequence. Overlapping positions are retained, only the newly required depth suffix is read, and only the newly generated video suffix is committed to the output stream.}
  \label{fig:method}
\end{figure}

For source $m$, let $\xi^{(m)}$ denote its state or observations, and let $\kappa^{(m)}$ denote the corresponding camera and alignment information. A source adapter $\mathcal A_m$ produces visible depth, while $\mathcal N_m$ standardizes its representation:
\begin{equation}
 \bar D^{(m)}=\mathcal N_m\!\left(\mathcal A_m(\xi^{(m)},\kappa^{(m)})\right),\qquad
 C=\operatorname{Patchify}\!\left[\operatorname{Norm}_z\!\left(E_{\mathrm{VAE}}(\operatorname{RGBReplicate}(\bar D))\right)\right].
 \label{eq:depth_encoding}
\end{equation}
Time, viewpoint, crop, near/far direction, scale, and encoding phase remain consistent throughout the trajectory. We replicate grayscale depth across the RGB channels, use the VAE posterior mode with FP32 latent normalization, and patchify with shape $(1,2,2)$. Each latent time position contains $S=(H_{\mathrm{px}}/32)(W_{\mathrm{px}}/32)$ spatial tokens, each with dimension 96. Both estimated and native depth follow the verified far-bright convention of the checkpoint. An online adapter encodes newly requested geometry under the same global phase; Appendix~\ref{app:depth_protocol} specifies the source mappings.

Appearance conditioning is provided through two paths (Figure~\ref{fig:method}). The original image and text are encoded once by Qwen3-VL and reused throughout the trajectory. A separate VAE first-frame condition uses $I_0=I_{\mathrm{ref}}$ initially and generated RGB at the start of each subsequent window. These independent conditioning rows are distinct from the multi-frame history, which occupies the target video sequence itself.

\subsection{Noise-Aligned Overlapping Continuation}
\label{sec:overlap}

\paragraph{History as a sampling boundary condition.}
We adopt the noise-to-data flow convention~\citep{lipman2023flow}:
\begin{equation}
 z_t=t z_{\mathrm{clean}}+(1-t)\epsilon,\qquad
 v=z_{\mathrm{clean}}-\epsilon,\qquad 0\leq t\leq1.
 \label{eq:flow}
\end{equation}
For window $i$, let $M_i$ mask the first $h_i$ latent positions corresponding to inherited history, and extend $H_i$ by zeros outside this prefix. We impose the history through the projection
\begin{equation}
 \boxed{
 \proj_{i,t}(z)
 =
 M_i\odot\bigl[tH_i+(1-t)\epsilon_i\bigr]
 +(1-M_i)\odot z.
 }
 \label{eq:projection}
\end{equation}
Thus, the inherited prefix follows the same noise-to-data trajectory as the current sample instead of being inserted as a clean latent into a noisy state. Global positions shared by overlapping windows use the same realization of global noise.

On the shifted flow grid $0=t_0<\cdots<t_K=1$, we initialize
\begin{equation}
 z_i^{(0)}=\proj_{i,t_0}(\epsilon_i),
\end{equation}
and perform
\begin{equation}
 \tilde z_i^{(k+1)}
 =
 z_i^{(k)}
 +(t_{k+1}-t_k)
 v_\theta\!\left(
 z_i^{(k)},t_k;\mathcal C_i
 \right),
 \qquad
 z_i^{(k+1)}
 =
 \proj_{i,t_{k+1}}\!\left(\tilde z_i^{(k+1)}\right),
 \label{eq:update}
\end{equation}
where
\begin{equation}
 \mathcal C_i=(C_i,I_i,c_{\mathrm g})
\end{equation}
collects the fixed depth, boundary-image, and global appearance conditions for window $i$.

The projection ensures that the inherited prefix is aligned with the current flow noise level before every model evaluation. After each numerical update, the predicted prefix is replaced by its prescribed history trajectory, whereas the predicted suffix is retained. Consequently, the inherited history fully participates in the bidirectional forward pass within each window and guides newly generated content without drifting. At $t_K=1$, the projected prefix exactly recovers $H_i$. Because the projection is an element-wise replacement, it requires no additional model evaluations.

\paragraph{History preservation and output commitment.}
For fixed history, mask, and noise, the projection is idempotent. The initialization and equation~\ref{eq:update} imply
\begin{equation}
 \proj_{i,t}(\proj_{i,t}(z))=\proj_{i,t}(z),\qquad
 M_i\odot z_i^{(k)}=M_i\odot[t_kH_i+(1-t_k)\epsilon_i].
 \label{eq:invariant}
\end{equation}
Thus, the terminal prefix equals $H_i$. We preserve the previously committed sequence $\hat Z_{<i}$ and append only the newly generated suffix:
\begin{equation}
 \hat Z_{\leq i}=\concat\bigl(\hat Z_{<i},z_i^{(K)}[h_i:L_i]\bigr).
 \label{eq:commit}
\end{equation}
Appendix~\ref{app:proofs} proves exact latent-history preservation. Section~\ref{sec:ablations} evaluates the generated suffix under fixed-window interventions, while the decoder described below preserves the global phase during RGB commitment.

\paragraph{Window geometry and boundary images.}
Under the temporal geometry of the video VAE, \(N=17n+5\) pixel frames correspond to \(F=5n+2\) latent frames. The default pixel window, overlap, and stride are \((W,O,\Delta)=(345,39,306)\), corresponding to \((L_{\max},h,\Delta_{\mathrm{latent}})=(102,12,90)\). For the evaluation stream at 24 FPS, a 30-second request is internally padded to 736 pixel frames and 217 latents, and the decoded output is trimmed to 720 frames. The corresponding latent windows are \([0,102)\), \([90,192)\), and \([180,217)\) (Figure~\ref{fig:timeline}). The subsequent windows start at pixel frames 306 and 612, while the nominal new-suffix boundaries are 345 and 651.

The next boundary RGB is taken at the \emph{start of the next window}: decoding latents \([85,97)\) and selecting local pixel frame 17 recovers frame 306, matching full-window decoding. RGB export preserves a single global VAE phase. Incremental delivery retains five unfinished blend frames and emits \([0,340)\), \([340,646)\), and then \([646,720)\); these pixel-delivery boundaries differ from the latent commitment boundaries.

\begin{figure}[t]
\centering
\includegraphics[width=\linewidth]{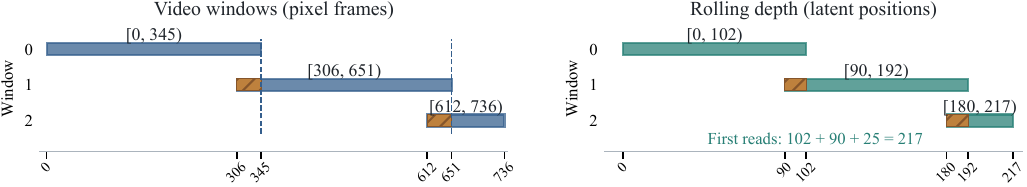}
\caption{\textbf{Aligned visual and geometric state.} Hatched intervals denote inherited history. Depth is read exactly once over \([0,102)\), \([102,192)\), and \([192,217)\). Dashed pixel lines mark the nominal latent-commitment boundaries, while stable incremental pixel output retains an additional five-frame blend tail.}
\label{fig:timeline}
\end{figure}

\subsection{Rolling Depth Latent State}
\label{sec:rolling}

For $i>0$, let $h_i=b_{i-1}-a_i$ denote the overlap with the preceding window. Rather than reprocessing overlapping geometry, we retain the corresponding canonical depth latents and append only newly requested positions:
\begin{equation}
 \boxed{
 C_i=
 \concat\left(
 C_{i-1}[a_i-a_{i-1}:L_{i-1}],
 \;\readdepth(b_{i-1},b_i)
 \right).
 }
 \label{eq:rolling}
\end{equation}
The first window reads $[a_0,b_0)$. A state update is committed only after interval, shape, dtype, and finite-value checks pass; retrying a window that has already been committed returns the cached condition. Once constructed, the depth state remains fixed throughout all $K$ model evaluations:
\begin{equation}
 C_i^{(0)}=C_i^{(1)}=\cdots=C_i^{(K-1)}=C_i.
 \label{eq:fixed_depth}
\end{equation}
Thus, unlike video history, depth does not follow the flow noise level; it remains a clean FP32 structural condition. Nevertheless, features that jointly depend on depth, the noisy video state, and the current flow time are recomputed at every model evaluation.

Under sequential, gap-free traversal and correct source reads, three properties follow (Appendix~\ref{app:proofs}). First,
\begin{equation}
 C_i=C[a_i:b_i],
\end{equation}
so overlapping depth positions retain exactly the same canonical latent values and dtype across windows. Second, strictly increasing right endpoints produce disjoint first-read intervals with a total temporal extent of $F$, independent of the number of flow evaluations $K$. Therefore, each global depth position is introduced into the rolling state exactly once. Third, the active depth rows satisfy
\begin{equation}
 \boxed{
 M_{\mathrm{depth}}
 \leq
 4L_{\max}S\times96
 \ \mathrm{bytes}.
 }
 \label{eq:memory}
\end{equation}
For the default $L_{\max}=102$, this yields an upper bound of $19.050\miB$ at $960\times544$ resolution and $9.413\miB$ at $672\times384$. The bound covers only the active canonical depth rows and excludes transient copies, fused conditioning features, model activations, and storage owned by the underlying geometry source. Relative to direct slicing of the same canonically encoded source, rolling depth preserves the conditioning tensor exactly. Its role is therefore to organize the active state and avoid redundant access rather than to alter the model output.

\subsection{Joint Generation Algorithm}
\label{sec:algorithm}

The logical state of window $i$ is
\begin{equation}
 \mathcal S_i=
 \left(
 \Omega_i,H_i,C_i,I_i,\epsilon_i,c_{\mathrm g}
 \right).
\end{equation}
A single global temporal plan determines the indices of video history, depth state, and noise, ensuring that all three refer to the same latent-time positions. Algorithm~\ref{alg:uniworld} combines rolling depth, noise-aligned history preservation, and boundary-image updates into a unified continuous generation procedure.

\begin{algorithm}[t]
\caption{UniWorld continuous generation}
\label{alg:uniworld}
\begin{tabbing}
\hspace{1em}\=\hspace{1em}\=\kill
\textbf{Input:} $I_{\mathrm{ref}},p$, depth source $\mathcal D$, windows $\{\Omega_i\}$, global noise $\epsilon$, grid $\{t_k\}_{k=0}^{K}$\\
$c_{\mathrm g}\leftarrow E_{\mathrm{Qwen}}(I_{\mathrm{ref}},p)$; $I_0\leftarrow I_{\mathrm{ref}}$; $Z\leftarrow[\,]$; depth state $\leftarrow[\,]$\\
\textbf{for} each $\Omega_i=[a_i,b_i)$ \textbf{do}\\
\> $C_i\leftarrow\operatorname{RollDepth}(\mathcal D,\Omega_i)$\\
\> $H_i\leftarrow Z[a_i:a_i+h_i]$; $\epsilon_i\leftarrow\epsilon[a_i:b_i]$\\
\> $\mathcal C_i\leftarrow\bigl(C_i,\operatorname{EncodeBoundary}(I_i),c_{\mathrm g}\bigr)$\\
\> $z_i\leftarrow\proj_{i,t_0}(\epsilon_i)$\\
\> \textbf{for} $k=0,\ldots,K-1$ \textbf{do}\\
\>\> $v_i\leftarrow v_\theta(z_i,t_k;\mathcal C_i)$\\
\>\> $z_i\leftarrow\proj_{i,t_{k+1}}\!\left(z_i+(t_{k+1}-t_k)v_i\right)$\\
\> \textbf{end for}\\
\> $Z\leftarrow\concat\bigl(Z,z_i[h_i:L_i]\bigr)$; depth state $\leftarrow C_i$\\
\> \textbf{if} another window remains: $I_{i+1}\leftarrow\operatorname{DecodeAtNextStart}(Z)$\\
\textbf{end for}\\
Decode $Z$ under the global VAE phase and trim alignment padding to the requested delivery length.
\end{tabbing}
\end{algorithm}

%
% ===== Begin inlined: experiments.tex =====
\section{Experiments}
\label{sec:experiments}

\subsection{Experimental Setup}
\label{sec:setup}
\paragraph{Data and implementation.}
The primary set contains 100 thirty-second clips, with 25 each from driving, human activity, simulation, and synthetic scenes. Video Depth Anything (VDA) ViT-L FP32 estimates~\citep{chen2025videodepth} are prepared at $832\times480$, 16 fps, normalized over each trajectory, and encoded for $960\times544$ generation. GT follows source timestamps at 24 fps and the same canvas transform for fidelity evaluation; real-time throughput is evaluated against a 14-FPS target. Additional studies cover short-video quality, 24 sources under nine paired conditions, 25 native-depth inputs, 30 long videos, and three PyBullet conditions. Appendix~\ref{app:evaluation} records source identities, splits, and configured-manifest sensitivity analyses.

The generator combines the released MiniMax-H3 ControlNet-Union for native depth conditioning with the VDN-H3 8-step model obtained by DMD distillation. The Union branch connects to five of 50 H3 blocks; inference uses eight evaluations per window, shifts 12/3, and guidance 1. A 30-second output spans three windows and 720 RGB frames. Each worker uses eight H100 GPUs with FSDP/Ulysses, with no additional joint training of the released components.

\paragraph{Metrics and statistics.}
We report video-level RGB PSNR, SSIM~\citep{wang2004ssim}, and LPIPS~\citep{zhang2018lpips}. Geometry uses VDA rank correlation, depth-edge F1, and NMAE with calibration fixed after five seconds; native comparisons share inverse-metre GT and original valid masks. Temporal error measures disagreement with GT frame changes at fixed seams. Intervals use 10,000 category-stratified source-group bootstrap replicates, one fixed seed per source, and marginal 95\% intervals. Appendix~\ref{app:evaluation} specifies sampling, calibration, and aggregation.

\subsection{Video Quality Comparison}
\label{sec:vbench}
We compare the H3/VDN DMD8 generator used by \method{} with H3 50-step, LongVie2, and LongVie AR2 on the same 100 sources. Table~\ref{tab:vbench} averages 5- and 14-second outputs equally (200 videos per configuration), using VBench's seven quality dimensions and officially weighted Quality Score~\citep{huang2023vbench}. DMD8 achieves the highest Quality Score, background consistency, motion smoothness, aesthetic quality, and imaging quality, exceeding H3 50-step by 0.67 Quality Score points. Appendix~\ref{app:vbench} gives native configurations, duration-level results, and paired intervals.

\begin{table}[t]
\centering
\caption{VBench quality ($\times100$, $\uparrow$); best in bold.}
\label{tab:vbench}
{\scriptsize\setlength{\tabcolsep}{3pt}%
% ===== Begin inlined: tables/vbench_quality.tex =====
% Generated by scripts/update_vbench_tables.py; do not edit numerical rows.
\begin{adjustbox}{max width=\linewidth}
\begin{tabular}{@{}lrrrrrrrr@{}}
\toprule
Configuration & \shortstack{Quality\\Score} & \shortstack{Subject\\cons.} & \shortstack{Background\\cons.} & \shortstack{Temporal\\flickering} & \shortstack{Motion\\smoothness} & \shortstack{Dynamic\\degree} & \shortstack{Aesthetic\\quality} & \shortstack{Imaging\\quality} \\
\midrule
H3 50-step & 79.87 & \textbf{87.46} & 92.94 & \textbf{96.72} & 98.47 & 83.50 & 52.60 & 62.28 \\
LongVie2 & 78.54 & 86.90 & 92.86 & 96.46 & 97.83 & 85.00 & 50.50 & 58.64 \\
LongVie AR2 & 78.62 & 86.28 & 92.19 & 96.63 & 97.84 & \textbf{86.00} & 50.06 & 60.24 \\
H3/VDN DMD8 & \textbf{80.54} & 87.20 & \textbf{93.10} & 96.54 & \textbf{98.68} & 85.50 & \textbf{54.57} & \textbf{63.51} \\
\bottomrule
\end{tabular}
\end{adjustbox}
% ===== End inlined: tables/vbench_quality.tex =====
}
\end{table}

\subsection{Multi-Scenario Continuous Generation}
\label{sec:multi}
One checkpoint completes all 100 clips and 300 windows. Table~\ref{tab:quality} reports source fidelity and depth consistency across the four categories; overall PSNR is 13.3973 dB (95\% CI $[12.9211,13.8794]$). Figure~\ref{fig:examples} shows outputs from the first frozen ablation identity in each category. Appendix~\ref{app:evaluation} provides full intervals and continuation-only measurements.

\begin{table}[t]
\centering
\caption{Thirty-second fidelity and depth consistency, $960\times544$, $W/O=345/39$, 8 NFE. Rank and Edge denote depth Spearman and edge F1; NMAE fixes calibration after five seconds. The final row excludes configured-manifest overlap.}
\label{tab:quality}
{\small\setlength{\tabcolsep}{3.2pt}%
% ===== Begin inlined: tables/quality.tex =====
% Generated by scripts/update_experiment_assets.py; do not edit numerical rows.
\begin{adjustbox}{max width=\linewidth}
\begin{tabular}{@{}lrrrrrrr@{}}
\toprule
Set & $n$ & PSNR$\uparrow$ & SSIM$\uparrow$ & LPIPS$\downarrow$ & Rank$\uparrow$ & Edge$\uparrow$ & NMAE$\downarrow$ \\
\midrule
Driving & 25 & 13.4429 & 0.5109 & 0.4727 & 0.9042 & 0.7027 & 0.0554 \\
Human activity & 25 & 14.9858 & 0.4799 & 0.3883 & 0.9299 & 0.7509 & 0.0523 \\
Simulation & 25 & 12.4034 & 0.3757 & 0.5914 & 0.7665 & 0.5223 & 0.2524 \\
Synthetic & 25 & 12.7571 & 0.3612 & 0.5475 & 0.9117 & 0.5655 & 0.0877 \\
\midrule
All & 100 & 13.3973 & 0.4319 & 0.5000 & 0.8781 & 0.6354 & 0.1120 \\
No manifest overlap & 90 & 13.1587 & 0.4357 & 0.5127 & 0.8702 & 0.6212 & 0.1181 \\
\bottomrule
\end{tabular}
\end{adjustbox}
% ===== End inlined: tables/quality.tex =====
}
\end{table}

\begin{figure}[t]
\centering
\includegraphics[width=\linewidth]{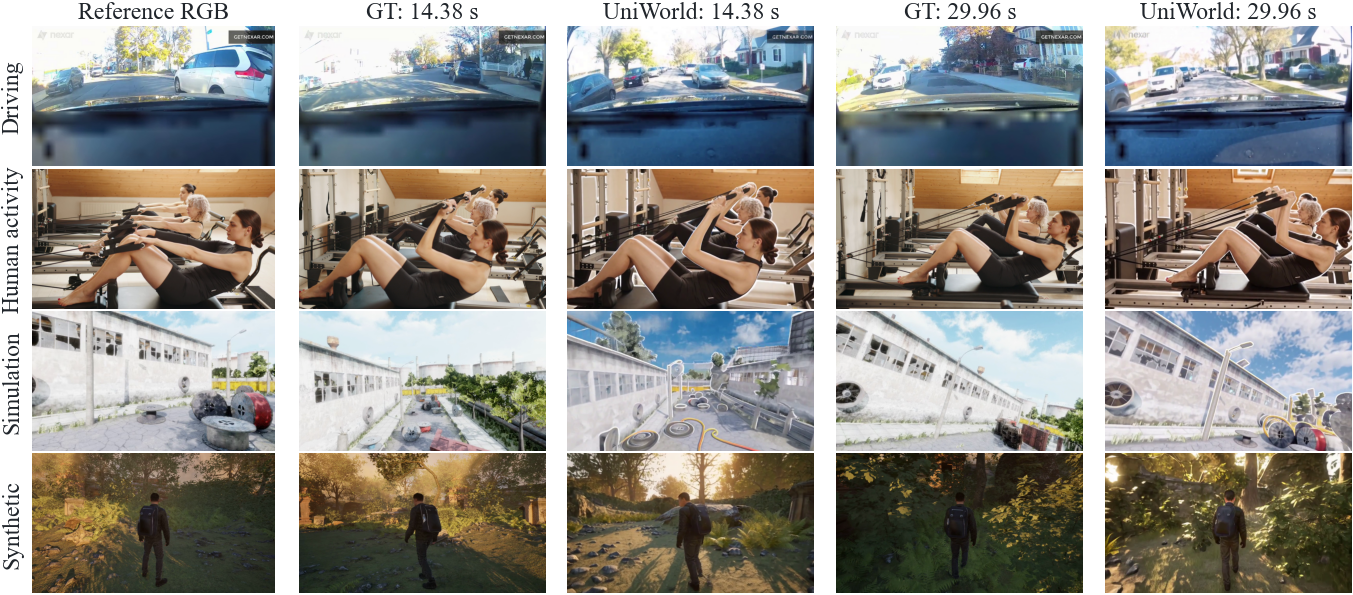}
\caption{\textbf{One model across four scene categories.} Actual reference, GT, and generated frames at the first continuation boundary and final frame. Rows use the first frozen ablation identity per category; selection is independent of quality scores.}
\label{fig:examples}
\end{figure}

\paragraph{Native depth.}
The same model also completes all 25 PointOdyssey/TartanAir inputs with native metric depth by changing only the source adapter. Appendix~\ref{app:depth_protocol} and Table~\ref{tab:native} give the matched comparison against common native GT, paired intervals, and valid-support rules.

\subsection{Mechanism Analysis and Ablations}
\label{sec:ablations}
\paragraph{State consistency in real model runs.}
Full-method replay, rolling/direct reads, and aligned depth re-encoding match exactly in all 24 cases; all three online/cached pairs also match. Appendix~\ref{app:checks} and Table~\ref{tab:checks} retain the complete state checks, including committed history under a control branch.

\paragraph{Fixed-window comparisons.}
We select six distinct source groups per category by a fixed deterministic ordering. All nine conditions share the exact first-window output, suffix noise, reference/text, window geometry, commitment boundaries, schedule, and VAE. Partial-history variants constrain the final 6 or 2 of the 12 overlap latents, leaving the discarded overlap unchanged. Table~\ref{tab:paired} reports each alternative minus full per-step projection.
Full projection reduces seam LPIPS by 0.00195 relative to boundary RGB only in this paired sample. Continuation PSNR and seam temporal-error intervals include zero, and fresh history noise gives slightly lower seam LPIPS. The endpoint-only control matches boundary-only output exactly: both have unconstrained history during model evaluations and discard the predicted overlap. This isolates per-step history participation from final output preservation. Appendix~\ref{app:paired} provides continuation, geometry, and manifest-exclusion results under the same protocol.

\begin{table}[t]
\centering
\caption{Paired effects, 24 sources per arm; brackets are marginal 95\% bootstrap intervals. PSNR covers $[345,720)$; LPIPS and temporal error cover fixed seams. Positive LPIPS/error differences favor the full method.}
\label{tab:paired}
{\scriptsize\setlength{\tabcolsep}{2.5pt}%
% ===== Begin inlined: tables/paired.tex =====
% Generated by scripts/update_experiment_assets.py; do not edit numerical rows.
\begin{adjustbox}{max width=\linewidth}
\begin{tabular}{@{}lccc@{}}
\toprule
Arm minus full & $\Delta$PSNR (dB)$\uparrow$ & $\Delta$LPIPS$\downarrow$ & $\Delta E_{\rm temp}\downarrow$ \\
\midrule
Boundary RGB only & $0.00637\;[-0.01467,0.02591]$ & $0.00195\;[0.00033,0.00362]$ & $-0.00047\;[-0.00140,0.00039]$ \\
Endpoint only & $0.00637\;[-0.01467,0.02591]$ & $0.00195\;[0.00033,0.00362]$ & $-0.00047\;[-0.00140,0.00039]$ \\
Fresh history noise & $-0.01337\;[-0.08561,0.05934]$ & $-0.00198\;[-0.00344,-0.00045]$ & $0.00021\;[-0.00054,0.00103]$ \\
Last 6 history latents & $0.00623\;[-0.00484,0.01751]$ & $0.00030\;[-0.00010,0.00072]$ & $0.00002\;[-0.00022,0.00029]$ \\
Last 2 history latents & $0.00917\;[-0.00888,0.02600]$ & $0.00113\;[-0.00016,0.00261]$ & $-0.00023\;[-0.00056,0.00008]$ \\
Direct global slice & $0.00000\;[0.00000,0.00000]$ & $0.00000\;[0.00000,0.00000]$ & $0.00000\;[0.00000,0.00000]$ \\
Window normalization & $-0.00897\;[-0.04902,0.03094]$ & $0.00063\;[-0.00076,0.00200]$ & $0.00085\;[-0.00014,0.00210]$ \\
Aligned re-encoding & $0.00000\;[0.00000,0.00000]$ & $0.00000\;[0.00000,0.00000]$ & $0.00000\;[0.00000,0.00000]$ \\
\bottomrule
\end{tabular}
\end{adjustbox}
% ===== End inlined: tables/paired.tex =====
}
\end{table}

\subsection{Runtime and Resource Usage}
\label{sec:runtime}

\paragraph{DiT forward throughput.}
Four fixed sources at each resolution receive one full-window warmup and three hot repeats: 32 requests including warmups, 12 hot requests per canvas. We time actual H3/VDN and Union forwards, including FSDP/Ulysses communication, with CUDA synchronization and per-step maxima over eight ranks. The devices are isolated from other experiment jobs; final latents match the corresponding complete-generation runs.

We evaluate real-time throughput against a 14-FPS target. Full-continuation windows produce 14.472 and 21.361 new FPS at $960\times544$ and $672\times384$, respectively, meeting the target at both resolutions. These timings cover synchronized H3/VDN and Union forwards with FSDP/Ulysses communication on eight H100 GPUs; Appendix~\ref{app:runtime} reports the complete window profiles and whole-trajectory throughput.

\paragraph{Depth-state resources.}
Active depth rows occupy 19.050 MiB at $960\times544$ and 9.413 MiB at $672\times384$, independently of trajectory length. Rolling reads 217/427/852 positions for 30/60/120-second outputs, versus 241/475/960 for direct slicing, while preserving identical output. This provides a bounded geometric working state and avoids repeated source requests. Consumer copies, carrier, hints, model storage, and growing global caches are accounted for separately, together with measured reader-switch costs, in Appendix~\ref{app:resources}.

\subsection{Long-Horizon Behaviour and Applications}
\label{sec:long}
The same window interface advances through 17 sixty-second videos from existing validation/test splits and thirteen 120-second training-split stress tests. Their full-video PSNR/LPIPS are 14.4542/0.49032 and 14.4277/0.54645. Figure~\ref{fig:long_horizon} characterizes fidelity and fixed-calibration geometric agreement over successive ten-second blocks; each duration uses its own source set and source-group confidence bands.

\begin{figure}[t]
\centering
\includegraphics[width=\linewidth]{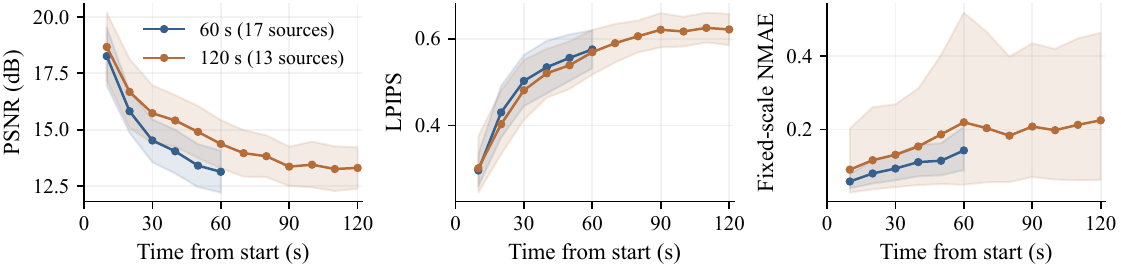}
\caption{\textbf{Continuous generation over 60--120 seconds.} Ten-second aggregates with category-stratified source-group 95\% bootstrap bands. The duration groups use different sources; the 120-second group is a training-split stress test.}
\label{fig:long_horizon}
\end{figure}

Three procedural PyBullet conditions~\citep{coumans2021pybullet} exercise warm/cool reference appearance, a camera revisit at frame 480, and a control branch activated at the second-window request, affecting frame 357 onward. On-demand and cached depth give bitwise-identical final latents in all three pairs. The control branch preserves the first window exactly and changes the new suffix (latent mean absolute difference 0.039076). Warm/cool references share identical geometry while producing different appearance. The first request reads 357 geometry frames, and RGB is delivered incrementally with the canonical decoder phase. Appendix~\ref{app:online} reports the appearance, revisit, and response measurements for this window-level engine interface.

% ===== End inlined: experiments.tex =====

\section{Conclusion}
\label{sec:conclusion}
We presented \method{}, a unified framework for continuous visual world simulation from depth trajectories and reference appearance. Its joint state contract preserves committed video latents, retains matching geometric controls, and reads new depth positions incrementally with bounded active state. One shared generator connects estimated depth, native simulation data, and an on-demand engine; exact replay and cached/online equivalence verify this connection in real model runs. Multi-scenario generation, fixed-window interventions, and long-horizon trajectories characterize the resulting visual simulation, while stage-specific measurements make its computation and data-access costs explicit. The measured steady-state DiT path achieves real-time throughput while maintaining bounded active state for continuous generation.
\label{main:last}

\FloatBarrier
\bibliographystyle{plainnat}
\bibliography{references}

\clearpage
\appendix
\setcounter{figure}{0}
\setcounter{table}{0}
\setcounter{equation}{0}
\setcounter{proposition}{0}
\setcounter{algorithm}{0}
\renewcommand{\thefigure}{S\arabic{figure}}
\renewcommand{\thetable}{S\arabic{table}}
\renewcommand{\theequation}{S\arabic{equation}}
\renewcommand{\theproposition}{S\arabic{proposition}}
\renewcommand{\thealgorithm}{S\arabic{algorithm}}
\renewcommand{\theHfigure}{supp.\arabic{figure}}
\renewcommand{\theHtable}{supp.\arabic{table}}
\renewcommand{\theHequation}{supp.\arabic{equation}}
\providecommand{\theHproposition}{}
\renewcommand{\theHproposition}{supp.\arabic{proposition}}
\providecommand{\theHalgorithm}{}
\renewcommand{\theHalgorithm}{supp.\arabic{algorithm}}
\input{appendix}

\end{document}

%% file: appendix.tex
% Supplementary material included by paper.tex.
\FloatBarrier
\section{Proofs and State Invariants}
\label{app:proofs}

\subsection{History Projection}
We regard the binary mask $M_i$ as broadcast over channel and spatial dimensions. The saved history $H_i$ and global noise slice $\epsilon_i$ are fixed within a window. All values are finite; statements below are exact in real arithmetic. Define $Q_{i,t}=tH_i+(1-t)\epsilon_i$.

\begin{proposition}[Idempotence and per-step history constraint]
For fixed $i,t$, $\proj_{i,t}$ is idempotent. The projected sampler satisfies $M_i\odot z_i^{(k)}=M_i\odot Q_{i,t_k}$ for every $k=0,\ldots,K$, and its terminal history is $H_i$.
\end{proposition}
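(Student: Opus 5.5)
The plan is to argue directly from the definition of $\proj_{i,t}$ in equation~\ref{eq:projection}, using only that $M_i$ is binary. Binarity gives the two identities $M_i\odot M_i=M_i$ and $M_i\odot(1-M_i)=0$, applied elementwise after broadcasting.

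\emph{Step 1 (idempotence).} Write $\proj_{i,t}(z)=M_i\odot Q_{i,t}+(1-M_i)\odot z$. The key fact is the \emph{prefix identity}
\[
M_i\odot\proj_{i,t}(z)=M_i\odot Q_{i,t}\quad\text{for every } z,
\]
which holds because $M_i\odot(1-M_i)=0$. Likewise $(1-M_i)\odot\proj_{i,t}(z)=(1-M_i)\odot z$. Applying $\proj_{i,t}$ to $w=\proj_{i,t}(z)$ gives $M_i\odot Q_{i,t}+(1-M_i)\odot w$. Substituting the second identity shows this equals $\proj_{i,t}(z)$. The only ingredients are that $H_i$, $\epsilon_i$ and $M_i$ are held fixed, so $Q_{i,t}$ does not depend on $z$.

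\emph{Step 2 (per-step constraint).} The claim follows from the prefix identity without a genuine induction. Each iterate $z_i^{(k)}$ lies in the image of a projection with time $t_k$. For $k=0$ we have $z_i^{(0)}=\proj_{i,t_0}(\epsilon_i)$. For $k\ge1$, equation~\ref{eq:update} gives $z_i^{(k)}=\proj_{i,t_k}(\tilde z_i^{(k)})$, and the same holds for the loop in Algorithm~\ref{alg:uniworld}. In both cases the prefix identity gives $M_i\odot z_i^{(k)}=M_i\odot Q_{i,t_k}$. Nothing about $v_\theta$ is needed beyond finiteness of $\tilde z_i^{(k+1)}$, which the standing assumption provides.

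\emph{Step 3 (terminal history).} At $k=K$ we have $t_K=1$, so $Q_{i,1}=H_i$ and therefore $M_i\odot z_i^{(K)}=M_i\odot H_i$. Since $H_i$ is zero-extended outside the prefix, this means the first $h_i$ latent positions of $z_i^{(K)}$ equal $H_i$ exactly. This justifies the commit rule in equation~\ref{eq:commit}.

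There is no real obstacle in this argument. The one point needing care is bookkeeping: the mask must be broadcast consistently over channel and spatial axes so that $M_i\odot M_i=M_i$ holds entrywise. We also rely on the convention, stated in the appendix, that the results hold in exact real arithmetic, not floating point.
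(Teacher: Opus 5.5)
Your proposal is correct and follows the paper's proof essentially step for step. Idempotence comes from $M_i\odot M_i=M_i$ and $M_i\odot(1-M_i)=0$; the per-step constraint comes from the fact that initialization and every update end with $\proj_{i,t_k}$ regardless of the velocity; and the terminal claim comes from $t_K=1$. Your remark that no inductive hypothesis is actually needed also matches the paper, which calls this an induction step but likewise relies only on the final projection.
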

\begin{proof}
Since $M_i^2=M_i$ and $M_i\odot(1-M_i)=0$,
\begin{align}
 \proj_{i,t}(\proj_{i,t}(z))
 &=M_i\odot Q_{i,t}+(1-M_i)\odot\left[M_i\odot Q_{i,t}+(1-M_i)\odot z\right]\\
 &=M_i\odot Q_{i,t}+(1-M_i)\odot z=\proj_{i,t}(z).
\end{align}
Initialization projects at $t_0$, establishing the constraint for $k=0$. Each subsequent update ends by applying $\proj_{i,t_{k+1}}$, which replaces every masked entry by the corresponding entry of $Q_{i,t_{k+1}}$, regardless of the predicted velocity. This proves the induction step. Finally $t_K=1$, so $Q_{i,t_K}=H_i$ on the prefix.
\end{proof}

Equation~\ref{eq:commit} separately preserves the stored global prefix by concatenating only new positions. Thus global committed history does not depend on recovering the sampled prefix at the end. Per-step projection serves the distinct purpose of placing that history in every model evaluation. At $h_i=0$, $M_i=0$ and the projection is the identity. We require $h_i<L_i$, so each continuation adds content.

\paragraph{Finite precision.}
The implementation owns FP32 copies of history and noise and writes $tH_i+(1-t)\epsilon_i$ directly into prefix rows. Repeated projections at the same numerical $t$ use the same operands. At $t=1$, the finite prefix values are recovered without a dtype conversion; the stored committed prefix is never rewritten. Claims of bitwise equality refer to the actual tensor checks under this implementation. They do not imply exact real-valued flow integration, equality under arbitrary precision changes, or equality of RGB decoded with different temporal contexts. The operation changes a sampling boundary condition, not the learned velocity field, and does not guarantee a globally coherent scene representation.

\subsection{Depth Overlap and Incremental Reads}
Let a canonical depth sequence $C$ have length $F$, and let the windows satisfy $a_0=0$, $b_{J-1}=F$, $a_i<a_{i+1}\leq b_i<b_{i+1}$, and $0\leq h_{i+1}<L_{i+1}$. Each read returns the requested canonical values in the same dtype and order.

\begin{proposition}[Canonical-condition equivalence]
If $C_0=C[a_0:b_0]$, equation~\ref{eq:rolling} gives $C_i=C[a_i:b_i]$ for every window. In particular, $C_i[0:h_i]=C_{i-1}[a_i-a_{i-1}:L_{i-1}]$.
\end{proposition}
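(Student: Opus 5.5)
We argue by induction on the window index $i$. The base case $i=0$ is the hypothesis $C_0=C[a_0:b_0]$.

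For the induction step, assume $C_{i-1}=C[a_{i-1}:b_{i-1}]$, a tensor of temporal length $L_{i-1}=b_{i-1}-a_{i-1}$. The argument rests on one elementary slicing identity: for $0\le p\le q\le b-a$,
\[
C[a:b][p:q]=C[a+p:a+q].
\]
The window constraint $a_{i-1}<a_i\le b_{i-1}$ gives $0<a_i-a_{i-1}\le L_{i-1}$, so the retained slice is well defined. Applying the identity with $a=a_{i-1}$, $p=a_i-a_{i-1}$ and $q=L_{i-1}$ yields
\[
C_{i-1}[a_i-a_{i-1}:L_{i-1}]=C[a_i:b_{i-1}].
\]
This tensor has length $b_{i-1}-a_i=h_i$. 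By assumption, the read $\readdepth(b_{i-1},b_i)$ returns the canonical values $C[b_{i-1}:b_i]$ in the same dtype and order. This interval is nonempty because $b_{i-1}<b_i$, and it lies inside $[0,F)$ because $b_i\le b_{J-1}=F$.

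Concatenating adjacent half-open intervals gives
\[
\concat\bigl(C[a_i:b_{i-1}],\,C[b_{i-1}:b_i]\bigr)=C[a_i:b_i],
\]
since $a_i\le b_{i-1}\le b_i$. Equation~\ref{eq:rolling} therefore gives $C_i=C[a_i:b_i]$, which closes the induction. The length also checks out: $h_i+(b_i-b_{i-1})=b_i-a_i=L_i$.

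For the second claim, note that $C_i[0:h_i]$ is by construction the first concatenated block. That block is exactly $C_{i-1}[a_i-a_{i-1}:L_{i-1}]$. Equivalently, both sides equal $C[a_i:a_i+h_i]$ by the slicing identity.

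The only real obstacle is bookkeeping: every index must fall in the valid range, and the offsets must be consistent. Concretely, the retained slice must be taken in the local coordinates of window $i-1$, which is why the offset $a_i-a_{i-1}$ appears rather than the global index $a_i$. We also need $h_i\ge0$ and a nonempty new read, which is exactly what the monotonicity assumptions $a_{i-1}<a_i\le b_{i-1}<b_i$ supply. Exactness in dtype and values follows because slicing and concatenation copy entries without arithmetic, given the stated read contract.
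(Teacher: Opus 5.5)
Your proposal is correct and follows essentially the same argument as the paper: induction on $i$, identify the retained suffix as $C[a_i:b_{i-1}]$ and the new read as $C[b_{i-1}:b_i]$, then concatenate these adjacent half-open intervals to obtain $C[a_i:b_i]$. Your extra bookkeeping is sound and makes explicit what the paper leaves implicit, namely the index ranges, the length $h_i$ of the retained block, and the derivation of the ``in particular'' clause.
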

\begin{proof}
The base case is assumed. If the statement holds for $i-1$, its retained suffix is exactly $C[a_i:b_{i-1}]$. The new read is $C[b_{i-1}:b_i]$. These intervals meet at $b_{i-1}$ without a gap or duplicated element. Their concatenation is $C[a_i:b_i]$, proving the induction step. Identical-value copying preserves the overlapping FP32 bit patterns.
\end{proof}

\begin{proposition}[Read count and active storage]
During one successful sequential traversal, first reads cover $F$ temporal positions, and one active depth state uses at most $4L_{\max}S\times96$ bytes.
\end{proposition}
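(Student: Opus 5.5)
The proof has two independent parts: a telescoping count for the first reads, and a size bound read off from the shape of the active tensor.

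\textbf{Read count.} By equation~\ref{eq:rolling}, the first window reads $[a_0,b_0)$ and window $i>0$ reads exactly $[b_{i-1},b_i)$. Cached retries of a committed window return the stored condition and issue no read, so these are the only first reads in a successful traversal.

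The standing assumptions give $a_0=0$, $b_{J-1}=F$, and $b_{i-1}<b_i$. Hence the intervals
\[
[0,b_0),\ [b_0,b_1),\ \ldots,\ [b_{J-2},b_{J-1})
\]
are nonempty. They are pairwise disjoint, since each left endpoint equals the previous right endpoint. Their union is $[0,F)$. The total extent therefore telescopes to $b_0+\sum_{i=1}^{J-1}(b_i-b_{i-1})=b_{J-1}-a_0=F$.

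The count does not involve $K$. By equation~\ref{eq:fixed_depth}, $C_i$ is built once per window and held fixed across all model evaluations. It follows that each global position in $[0,F)$ enters the rolling state exactly once.

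\textbf{Active storage.} By the canonical-condition equivalence proposition, $C_i=C[a_i:b_i]$. So $C_i$ has $L_i\le L_{\max}$ temporal positions. Each position contains $S$ spatial tokens of dimension $96$, as fixed by the patchify shape in equation~\ref{eq:depth_encoding}. The tensor is stored in FP32, which is $4$ bytes per entry. The active state therefore holds $L_iS\cdot 96$ entries, and
\[
4L_iS\times 96\le 4L_{\max}S\times 96 \text{ bytes}.
\]

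\textbf{Main obstacle.} The delicate point is what counts as ``one active depth state.'' During the update, the retained suffix of $C_{i-1}$ and the new read coexist before concatenation. The bound is stated only for the committed active rows, and transient copies are excluded as in the main text. I would therefore state explicitly that, after the commit step of Algorithm~\ref{alg:uniworld} (depth state $\leftarrow C_i$), only $C_i$ is retained. The bound then applies to that object alone.
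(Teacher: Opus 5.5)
Your proposal is correct and follows essentially the same argument as the paper: the first-read intervals $[0,b_0),[b_0,b_1),\ldots,[b_{J-2},F)$ partition $[0,F)$ and telescope to $F$ independently of $K$ because $C_i$ is fixed within a window, and the active tensor has at most $L_{\max}S$ rows of 96 four-byte FP32 entries. Your caveat about coexisting old-plus-new buffers also matches the paper's own remark that the bound covers only the single active state, not transient copies.
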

\begin{proof}
The intervals are $[0,b_0),[b_0,b_1),\ldots,[b_{J-2},F)$. They are disjoint and partition $[0,F)$, giving $b_0+\sum_{i=1}^{J-1}(b_i-b_{i-1})=F$ positions. No model evaluation reads an additional interval because $C_i$ is fixed within the window. The active tensor contains at most $L_{\max}S$ rows, each with 96 FP32 elements of four bytes.
\end{proof}

First-read counts concern logical source positions, not operating-system disk traffic or GPU copies. A direct source slice can be a view; it need not perform physical I/O. Concatenation, defensive return copies, and carrier packing still cost time and memory. A failed read may be retried, and restarting a request may reread its source. The stated count applies to successful traversal and same-window cache reuse, without such restarts. Likewise, transient old-plus-new buffers can exceed the size of a single active state.

\FloatBarrier
\section{Conditioning and Implementation}
\label{app:conditioning}

\subsection{Model Components}
The backbone is MiniMax-H3 with VDN hybrid video attention~\citep{minimax2026h3,xi2026vdn}. The 50-block backbone retains bidirectional within-window processing. The five-block dense Union branch comes from MiniMax-H3-Fun-Controlnet-Union in VideoX-Fun~\citep{aigcapps2026videox}; its parameters, Q/K/V projections, and activations remain separate. The original Qwen3-VL-32B encoder~\citep{bai2025qwen3vl} provides layer-50 hidden states of width 5,120 for image and text conditioning. The existing video VAE supplies both the RGB and depth encodings.

The shared generator combines published VDN weights with the existing H3 and Union components, without additional local joint training or a new distillation objective. Appendix~\ref{app:evaluation} describes evaluation-source selection and the scope of the local training-overlap analysis.

Experiments use PyTorch 2.11 with CUDA 12.8 on workers equipped with eight 80-GB NVIDIA H100 GPUs. Metric and simulator versions are specified with their evaluation protocols.

\subsection{Independent First-Frame Conditions}
The original Qwen representation stays fixed across windows. The single-frame VAE path uses posterior sampling with encoding seed 42, FP16 rounding, then FP32 latent normalization. Denote the resulting row tensor by $R_i$. The fixed first-frame input is
\begin{equation}
 c_i^{\mathrm{ref}}=0.999R_i+0.001\epsilon^{\mathrm{ref}},
 \qquad t_i^{\mathrm{ref}}=\max(t,0.999).
 \label{eq:keyframe}
\end{equation}
The noise realization is fixed for the request. The input rows remain fixed within a window even though their model activations change. This differs from target-prefix projection, whose values explicitly follow $t$ from 0 to 1. Casting 0.999 to BF16 first would round it to 1.0; the prescribed condition mixture and history projection therefore use FP32 arithmetic. The model retains its native modulation conventions. We do not refresh the Qwen embedding from each generated boundary or use GT boundaries during inference.

\subsection{Depth Rows and Union Packing}
Depth uses posterior \emph{mode}, followed by FP32 normalization and $(1,2,2)$ patchification. Unlike equation~\ref{eq:keyframe}, depth is clean conditioning without the first-frame mixture. The Union input has 49 pre-patch channels: 24 for depth, one for a visibility/mask channel, and 24 for inpainting. Spatial patchification gives $49\times4=196$ carrier columns. The 96 depth columns occupy the target-video rows; the remaining 100 columns are zero for this path. A first-frame placeholder is included in the carrier layout.

Target rows are selected by explicit indices. Qwen vision tokens, independent VAE condition rows, and target video rows can share modality tags, so a tag test alone cannot identify the target. Zero padding must also be combined with explicit row selection to limit the effect of a learned projection bias. The depth tensor and control carrier are reused within a window, but Union hints depend on the current noisy target and time and are recomputed at every step. There is no cross-window backbone K/V cache introduced by the two state rules.

\begin{table}[htbp]
\centering
\caption{Analytical storage for FP32 depth rows. Carrier storage, when listed, includes one first-frame placeholder; transient copies and model state are excluded.}
\label{tab:storage}
\begin{adjustbox}{max width=\linewidth}
\begin{tabular}{lrrrr}
\toprule
Canvas & $S$ & 102-latent depth & 12-latent overlap & 196-column carrier \\
\midrule
$672\times384$ & 252 & $9.413\miB$ & $1.107\miB$ & $19.407\miB$ \\
$960\times544$ & 510 & $19.050\miB$ & $2.241\miB$ & $39.276\miB$ \\
\bottomrule
\end{tabular}
\end{adjustbox}
\end{table}

\subsection{Sampling and State Lifecycle}
Each video window uses eight actual velocity evaluations with video shift 12; the audio shift is 3 and guidance scale is 1. A scheduler interface that counts grid points may expose nine points for eight updates. Global video and audio noise and first-frame noise are prepared deterministically and sliced consistently. Projection runs once before the first evaluation and once after each update, so the next model input already satisfies the constraint. The first-window mask is empty.

Rolling depth validates ordered window indices, global starts, VAE/audio phase, overlap, shape, FP32 dtype, and finite newly read values. Valid data are committed atomically to the logical cache; returning a copy prevents callers from modifying retained controls through shared storage. Retrying the current window reuses that state.

%
% ===== Begin inlined: appendix_experiments.tex =====
\FloatBarrier
\section{Evaluation Protocols and Additional Comparisons}
\label{app:evaluation}

\subsection{Datasets and Source Selection}
The primary set comprises the sources in Table~\ref{tab:data}. Named collections include DanceTrack~\citep{sun2022dancetrack}, BEHAVE~\citep{bhatnagar2022behave}, PointOdyssey~\citep{zheng2023pointodyssey}, TartanAir~\citep{wang2020tartanair}, and MIND~\citep{ye2026mind}. We use PointOdyssey v1.2 and TartanAir V2 as part of this common evaluation set rather than their official benchmark protocols. The human-activity category also includes eleven local original clips.

\begin{table}[htbp]
\centering
\caption{Primary 30-second sources. Every category initially uses estimated VDA depth. Native-depth inputs are a paired intervention on the same 25 simulation sources.}
\label{tab:data}
\begin{tabularx}{\linewidth}{@{}lrX@{}}
\toprule
Category & Clips & Collection \\
\midrule
Driving & 25 & Nexar \\
Human activity & 25 & DanceTrack (10), BEHAVE (4), local originals (11) \\
Simulation & 25 & PointOdyssey v1.2 (15), TartanAir V2 (10) \\
Synthetic & 25 & MIND (25) \\
\bottomrule
\end{tabularx}
\end{table}

The 100 primary clips contain 96 distinct source groups. Ten human-activity clips also appear in the configured 10,000-sample local training set. The evaluated generator reuses pretrained components without local joint training; exclusion of these clips is therefore reported as a sensitivity analysis. This overlap check covers the local data collection and does not characterize the original pretraining data of H3, VDN, or Union.

The 24 ablation sources comprise six distinct source groups per category, selected by a fixed deterministic ordering independent of quality. Every source uses one deterministic seed. The 17 sixty-second sources use existing validation/test splits; all 13 120-second sources belong to an existing training split and are reported as stress tests.

\subsection{Depth Representation and Ground-Truth Alignment}
\label{app:depth_protocol}
Estimated depth uses VDA ViT-L FP32 over the full trajectory at $832\times480$, 16 fps. Normalized depth is temporally resampled and encoded on the generation canvas. RGB GT is resampled at 24 fps using source timestamps, with an $832\times480$ cover resize/crop followed by $960\times544$ Lanczos resizing for primary comparisons. This 24-FPS stream is retained for fidelity evaluation; real-time throughput is evaluated against a 14-FPS target. Generated videos are evaluated at their delivered resolution and duration, including the generated first frame. Padding needed for $17n+5$ geometry is never included in delivered-frame counts.

\paragraph{Verified numerical direction.}
VDA raw output is inverse-depth oriented, with larger values near the camera. The normalization helper reverses that ordering, making the actual condition \emph{far bright}. For metric depth $z>0$, native preparation uses $u=1/z$, obtains trajectory-wide $q_5,q_{95}$ from originally valid inverse-depth pixels, and applies
\begin{equation}
 \bar D=\frac{q_{95}-\operatorname{clip}(u,q_5,q_{95})}{q_{95}-q_5}.
 \label{eq:native_mapping}
\end{equation}
This matches the checkpoint's existing numeric convention; it does not change the VDA baseline. Native TartanAir uses packed float32 metres; PointOdyssey depth uses $z=1000\,d_{16}/65535$ metres. Invalid pixels are filled temporally for generation only, and original masks accompany the metrics. All 25 native comparisons retain identical prompt embeddings, text tags, first-frame rows, reference, noise identity, and model; only the depth condition changes.

\begin{table}[htbp]
\centering
\caption{Estimated versus native depth on 25 matched sources; geometry uses common native GT.}
\label{tab:native}
{\small\setlength{\tabcolsep}{3.8pt}%
% ===== Begin inlined: tables/native.tex =====
% Generated by scripts/update_experiment_assets.py; do not edit numerical rows.
\begin{adjustbox}{max width=\linewidth}
\begin{tabular}{@{}lrrrrrr@{}}
\toprule
Input & PSNR$\uparrow$ & SSIM$\uparrow$ & LPIPS$\downarrow$ & Rank$\uparrow$ & Edge$\uparrow$ & NMAE$\downarrow$ \\
\midrule
Estimated VDA & 12.4034 & 0.3757 & 0.5914 & 0.7818 & 0.4951 & 0.2783 \\
Native metric depth & 11.6948 & 0.3527 & 0.6208 & 0.7778 & 0.4798 & 0.2722 \\
\bottomrule
\end{tabular}
\end{adjustbox}
% ===== End inlined: tables/native.tex =====
}
\end{table}

\begingroup
\setlength{\emergencystretch}{3em}
For completeness, the native-minus-estimated paired differences are $-0.70859$ dB PSNR (95\% CI $[-1.31984,-0.04591]$), $+0.02943$ LPIPS ($[0.00569,0.05236]$), $-0.00408$ native-GT rank ($[-0.03886,0.04161]$), and $-0.00612$ native-GT NMAE ($[-0.05175,0.03045]$). These use the same 25 sources and 21 source groups, with common native GT for both geometry evaluations.
\par
\endgroup

\paragraph{On-demand source.}
PyBullet 3.2.7 TinyRenderer converts its z-buffer to axial metre depth using near/far planes 0.1/25 m. A fixed, predeclared 1.5/12 m mapping supplies far-bright conditions:
\begin{equation}
 \bar D_{\mathrm{engine}}(z)=\operatorname{clip}\!\left(
 \frac{1/1.5-1/z}{1/1.5-1/12},0,1\right).
\end{equation}
It is quantized to uint8 and encoded in canonical blocks. This adapter uses fixed metric bounds and does not inspect future trajectory percentiles. Warm and cool appearances have bitwise-identical depth, and the control branch shares the initial source interval exactly.

\subsection{Fidelity, Perceptual, and Temporal Metrics}
For video $s$, RGB values in $[0,1]$, and frame set $\mathcal T$, the primary PSNR is
\begin{align}
 \operatorname{MSE}_s(\mathcal T)
 &=\frac{1}{3H_{\mathrm{px}}W_{\mathrm{px}}|\mathcal T|}
   \sum_{j\in\mathcal T}\sum_{x,y,c}(\hat V_{s,j,x,y,c}-V_{s,j,x,y,c})^2,\\
 \operatorname{PSNR}_s(\mathcal T)&=-10\log_{10}\operatorname{MSE}_s(\mathcal T).
\end{align}
Decoded eight-bit RGB is converted to FP64 for this calculation. We average video-level dB, not framewise dB or the dB of pooled cross-video MSE. SSIM uses FFmpeg's default $8\times8$ blocks over all RGB channels, averaged across frames then videos. Full, first-frame-excluded, and common continuation sets are $[0,720)$, $[1,720)$, and $[345,720)$.

LPIPS uses the official v0.1 AlexNet model (package 0.1.4), RGB scaled to $[-1,1]$, at the original inference canvas. Its deterministic sample contains every twelfth frame at 24 fps plus all frames in the fixed seam bands. The full LPIPS mean therefore includes this seam-augmented sample; seam LPIPS is reported separately. Temporal error uses every 24-fps consecutive-frame transition:
\begin{equation}
 E_{\mathrm{temp}}(j)=\operatorname{mean}_{x,y,c}
 \left| (\hat V_j-\hat V_{j-1})-(V_j-V_{j-1})\right|.
 \label{eq:temporal}
\end{equation}
Prediction and GT motion magnitudes are recorded separately. Small frame-to-frame change alone does not establish correct motion. Fixed 12-frame neighborhoods are $[339,351)$ and $[645,657)$; all ablations use identical absolute indices.

Across the primary set, SSIM is 0.431918, LPIPS is 0.49997, and depth rank correlation is 0.87809. Excluding frame zero gives 13.3915 dB PSNR; the common continuation $[345,720)$ gives 12.6700 dB PSNR and 0.54241 LPIPS.

\subsection{Geometric Metrics and Valid Support}
Generated RGB is sampled uniformly at 2 fps, resized to $832\times480$, and passed to VDA ViT-L FP32 with internal input size 518. Prediction and target depth are area-resized to $208\times120$ for metrics. Rank is Spearman correlation. Edge F1 thresholds each depth-gradient magnitude at its valid-region 90th percentile and permits a one-pixel match radius. A nonnegative affine scale and offset are fitted once over valid pixels in the first five seconds; later frames never refit. NMAE divides fixed-calibration absolute error by the calibration target's $p_{95}-p_5$.

Primary geometry compares generated-video VDA output against the VDA conditioning trajectory and measures estimator self-consistency. In native comparisons, both generated outputs use the \emph{same native inverse-metre target}; scores against different conditioning sources cannot establish source-to-source geometric gains. This native target removes dependence on VDA for target geometry, while the generated-side estimate still has estimator bias.

A downsampled native-depth pixel is valid only if every original pixel contributing to its area average is valid. Edge evaluation additionally requires a fully valid $3\times3$ Sobel neighborhood. Filled values never participate in calibration or evaluation. Frames without valid support or calibration contribute no value to the corresponding metric; aggregate means use the remaining valid observations and retain their sample counts.

\subsection{Cluster Bootstrap and Aggregate Intervals}
Bootstrap uses seed 20260924 and 10,000 replicates. We sample source groups with replacement within each category, preserving all videos and their weights in each selected group. For ablations, resampling uses paired same-source differences. This estimates variation across sources under a fixed seed per source, not seed-to-seed generation variability. Intervals are marginal 95\% percentiles, with no family-wise adjustment across the exploratory arms and metrics.

\begin{table}[htbp]
\centering
\caption{Primary source-fidelity means with category-stratified source-group 95\% bootstrap intervals. The 90-clip subset excludes overlap with the configured local training set.}
\label{tab:quality_ci}
{\small\setlength{\tabcolsep}{3pt}%
% ===== Begin inlined: tables/quality_intervals.tex =====
% Generated by scripts/update_experiment_assets.py; do not edit numerical rows.
\begin{adjustbox}{max width=\linewidth}
\begin{tabular}{@{}lccc@{}}
\toprule
Set & PSNR (dB)$\uparrow$ & SSIM$\uparrow$ & LPIPS$\downarrow$ \\
\midrule
Driving & $13.4429\;[12.4759,14.4184]$ & $0.5109\;[0.4682,0.5516]$ & $0.4727\;[0.4283,0.5158]$ \\
Human activity & $14.9858\;[13.9759,15.9918]$ & $0.4799\;[0.4163,0.5395]$ & $0.3883\;[0.3499,0.4316]$ \\
Simulation & $12.4034\;[11.2682,13.5804]$ & $0.3757\;[0.3093,0.4445]$ & $0.5914\;[0.5529,0.6262]$ \\
Synthetic & $12.7571\;[12.0658,13.4274]$ & $0.3612\;[0.3073,0.4178]$ & $0.5475\;[0.5238,0.5701]$ \\
All & $13.3973\;[12.9211,13.8794]$ & $0.4319\;[0.4037,0.4601]$ & $0.5000\;[0.4813,0.5185]$ \\
No manifest overlap & $13.1587\;[12.6365,13.6800]$ & $0.4357\;[0.4063,0.4653]$ & $0.5127\;[0.4926,0.5329]$ \\
\bottomrule
\end{tabular}
\end{adjustbox}
% ===== End inlined: tables/quality_intervals.tex =====
}
\end{table}

\subsection{Paired Mechanism Protocol and Additional Results}
\label{app:paired}
All 216 paired outputs use the same first-window video/audio latents for their source. Non-depth inputs, suffix noise, window size, discarded overlap, commitment points, 8 NFE, video/audio schedule 12/3, and VAE precision remain fixed. The depth interventions alter only the intended continuation conditions. Table~\ref{tab:arms} specifies the interventions.

\begin{table}[htbp]
\centering
\caption{Nine completed conditions, each applied to the same 24 sources. Geometry and commitment boundaries remain fixed.}
\label{tab:arms}
\begin{tabularx}{\linewidth}{@{}lX@{}}
\toprule
Condition & Intervention \\
\midrule
Full & Project all 12 history latents at every sampling step; canonical rolling depth. \\
Boundary RGB only & Retain independent boundary-image conditioning; no target-history projection. \\
Endpoint only & Project the full overlap only at $t=1$, after all model evaluations. \\
Fresh history noise & Redraw overlap noise once, then keep it fixed across steps; suffix noise is unchanged. \\
Last 6 / last 2 latents & Project only $[h_i-q,h_i)$ with $q=6$ or 2, still discarding all $h_i$ overlap latents. \\
Direct global slice & Read canonical $C[a_i:b_i]$ independently at each advance. \\
Window normalization & Recompute percentiles for continuation windows with canonical global encoding context. \\
Aligned re-encoding & Keep globally normalized pixels; encode each phase-aligned window independently with its padded tail. \\
\bottomrule
\end{tabularx}
\end{table}

At the endpoint, restoring history cannot alter the suffix already generated. Boundary-only and endpoint-only variants therefore produce identical committed outputs, as confirmed in 24/24 cases. Direct slicing and rolling retain identical conditions and outputs, also in 24/24 cases. Independent phase-aligned re-encoding is exact because the H3 encoder uses independent 17-frame chunks; these cases do not demonstrate encoding-context drift. Comparisons that vary both overlap and window size are reported separately in Appendix~\ref{app:overlap_results}.

\begin{figure}[t]
\centering
\includegraphics[width=\linewidth]{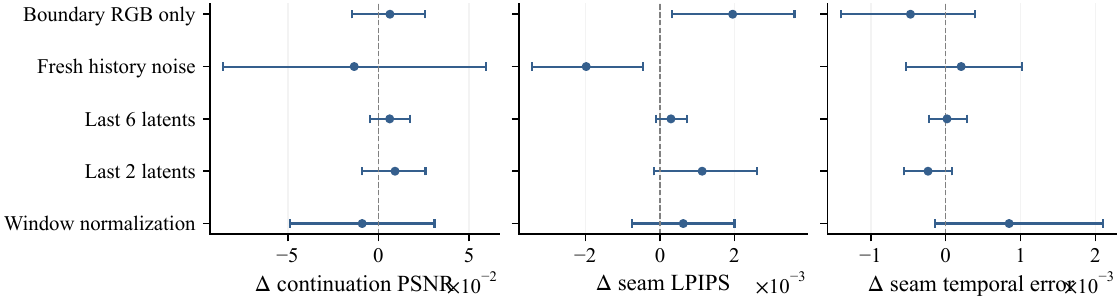}
\caption{Paired effect sizes and marginal 95\% intervals for non-equivalent interventions, each minus full history projection. Endpoint-only is identical to boundary-only; direct slicing and aligned re-encoding are exact zero-effect controls.}
\label{fig:paired_effects}
\end{figure}

\begin{table}[htbp]
\centering
\caption{Additional continuation metrics on $[345,720)$, alternative minus full, $n=24$ per row. Brackets give marginal 95\% source-group bootstrap intervals.}
\label{tab:paired_continuation}
{\scriptsize\setlength{\tabcolsep}{2.5pt}%
% ===== Begin inlined: tables/paired_continuation.tex =====
% Generated by scripts/update_experiment_assets.py; do not edit numerical rows.
\begin{adjustbox}{max width=\linewidth}
\begin{tabular}{@{}lccc@{}}
\toprule
Arm minus full & $\Delta$SSIM$\uparrow$ & $\Delta$LPIPS$\downarrow$ & $\Delta E_{\rm temp}\downarrow$ \\
\midrule
Boundary RGB only & $0.00018\;[-0.00052,0.00083]$ & $0.00157\;[0.00036,0.00288]$ & $-0.00032\;[-0.00045,-0.00020]$ \\
Endpoint only & $0.00018\;[-0.00052,0.00083]$ & $0.00157\;[0.00036,0.00288]$ & $-0.00032\;[-0.00045,-0.00020]$ \\
Fresh history noise & $-0.00357\;[-0.00861,0.00032]$ & $-0.00118\;[-0.00266,0.00045]$ & $0.00000\;[-0.00042,0.00039]$ \\
Last 6 history latents & $0.00045\;[0.00001,0.00083]$ & $0.00026\;[-0.00009,0.00064]$ & $-0.00001\;[-0.00009,0.00008]$ \\
Last 2 history latents & $0.00049\;[-0.00011,0.00103]$ & $0.00099\;[0.00001,0.00208]$ & $-0.00030\;[-0.00044,-0.00017]$ \\
Direct global slice & $0.00000\;[0.00000,0.00000]$ & $0.00000\;[0.00000,0.00000]$ & $0.00000\;[0.00000,0.00000]$ \\
Window normalization & $-0.00050\;[-0.00278,0.00222]$ & $0.00052\;[-0.00073,0.00182]$ & $0.00048\;[-0.00002,0.00110]$ \\
Aligned re-encoding & $0.00000\;[0.00000,0.00000]$ & $0.00000\;[0.00000,0.00000]$ & $0.00000\;[0.00000,0.00000]$ \\
\bottomrule
\end{tabular}
\end{adjustbox}
% ===== End inlined: tables/paired_continuation.tex =====
}
\end{table}

\begin{table}[htbp]
\centering
\caption{Full-video VDA geometry effects, alternative minus full, $n=24$ per row. These measure consistency with VDA input, not independent metric geometry. Intervals are unadjusted across metrics and arms.}
\label{tab:paired_geometry}
{\scriptsize\setlength{\tabcolsep}{2.5pt}%
% ===== Begin inlined: tables/paired_geometry.tex =====
% Generated by scripts/update_experiment_assets.py; do not edit numerical rows.
\begin{adjustbox}{max width=\linewidth}
\begin{tabular}{@{}lccc@{}}
\toprule
Arm minus full & $\Delta$Rank$\uparrow$ & $\Delta$Edge$\uparrow$ & $\Delta$NMAE$\downarrow$ \\
\midrule
Boundary RGB only & $-0.00091\;[-0.00158,-0.00031]$ & $-0.00078\;[-0.00181,0.00028]$ & $0.00032\;[-0.00023,0.00084]$ \\
Endpoint only & $-0.00091\;[-0.00158,-0.00031]$ & $-0.00078\;[-0.00181,0.00028]$ & $0.00032\;[-0.00023,0.00084]$ \\
Fresh history noise & $-0.00103\;[-0.00361,0.00093]$ & $0.00011\;[-0.00281,0.00295]$ & $-0.00042\;[-0.00164,0.00086]$ \\
Last 6 history latents & $-0.00021\;[-0.00138,0.00095]$ & $0.00024\;[-0.00060,0.00118]$ & $-0.00018\;[-0.00081,0.00033]$ \\
Last 2 history latents & $-0.00084\;[-0.00232,0.00016]$ & $-0.00083\;[-0.00153,-0.00014]$ & $-0.00011\;[-0.00071,0.00047]$ \\
Direct global slice & $0.00000\;[0.00000,0.00000]$ & $0.00000\;[0.00000,0.00000]$ & $0.00000\;[0.00000,0.00000]$ \\
Window normalization & $0.00175\;[-0.00015,0.00380]$ & $-0.00046\;[-0.00170,0.00082]$ & $-0.00158\;[-0.00564,0.00126]$ \\
Aligned re-encoding & $0.00000\;[0.00000,0.00000]$ & $0.00000\;[0.00000,0.00000]$ & $0.00000\;[0.00000,0.00000]$ \\
\bottomrule
\end{tabular}
\end{adjustbox}
% ===== End inlined: tables/paired_geometry.tex =====
}
\end{table}

\begin{table}[htbp]
\centering
\caption{Selected paired effects after excluding overlap with the configured local training set ($n=22$). PSNR is continuation fidelity; LPIPS and temporal error use fixed seams.}
\label{tab:paired_sensitivity}
{\scriptsize\setlength{\tabcolsep}{1.7pt}%
% ===== Begin inlined: tables/paired_sensitivity.tex =====
% Generated by scripts/update_experiment_assets.py; do not edit numerical rows.
\begin{adjustbox}{max width=\linewidth}
\begin{tabular}{@{}lrccc@{}}
\toprule
Arm minus full & $n$ & $\Delta$PSNR$\uparrow$ & $\Delta$LPIPS$\downarrow$ & $\Delta E_{\rm temp}\downarrow$ \\
\midrule
Boundary RGB only & 22 & $0.00462\;[-0.01798,0.02533]$ & $0.00169\;[-0.00001,0.00331]$ & $-0.00051\;[-0.00152,0.00044]$ \\
Fresh history noise & 22 & $-0.01197\;[-0.08981,0.06875]$ & $-0.00170\;[-0.00328,-0.00012]$ & $0.00018\;[-0.00061,0.00107]$ \\
Window normalization & 22 & $-0.01149\;[-0.05585,0.03268]$ & $0.00075\;[-0.00075,0.00229]$ & $0.00086\;[-0.00022,0.00224]$ \\
\bottomrule
\end{tabular}
\end{adjustbox}
% ===== End inlined: tables/paired_sensitivity.tex =====
}
\end{table}

The seam comparison measures history participation near fixed output boundaries, while the continuation table measures its effect over the full suffix. The boundary-only seam LPIPS difference is 0.00195 on all 24 sources and 0.00169 on the 22-source sensitivity subset; the latter interval includes zero. Its continuation temporal-error difference is $-0.000324$ ($[-0.000449,-0.000195]$). Reporting both temporal scopes separates local seam behavior from the complete generated interval. State preservation is established by the exact checks independently of these quality measurements.

\subsection{State Consistency Checks}
\label{app:checks}
All 24 full-method replays reproduce their primary outputs exactly. Full and native-depth windows execute eight model evaluations; each alternative arm reuses its source's first-window output and executes eight evaluations in each continuation window. Constrained history rows have zero maximum absolute error. Local boundary extraction matches full decoding at window lengths of 92, 97, and 102 latents. Rolling and direct depth access also produce bitwise-equal video and audio latents in two 60-second cases. The quality evaluation concerns the visual outputs.

\begin{table}[htbp]
\centering
\caption{Temporal state consistency in completed model runs. Equality is checked on latent tensors or complete video outputs.}
\label{tab:checks}
\begin{tabularx}{\linewidth}{@{}lX@{}}
\toprule
Property & Result \\
\midrule
Reconstruct canonical generation & 24/24 full-method outputs exactly match \\
Incremental geometric reads & 24/24 rolling/direct outputs bitwise equal \\
Canonical encoding phase & 24/24 aligned re-encoded outputs bitwise equal \\
On-demand source equivalence & 3/3 online/cached outputs bitwise equal \\
Committed control-branch history & Initial-window latents bitwise equal \\
\bottomrule
\end{tabularx}
\end{table}

%
% ===== Begin inlined: appendix_vbench.tex =====
\subsection{VBench Quality Comparison: Protocol and Paired Results}
\label{app:vbench}
Table~\ref{tab:vbench} evaluates 800 existing videos from four configurations, two nominal durations, and the same 100 sources (25 per category; 96 source groups). The H3/VDN DMD8 row is the shared eight-evaluation generator used by \method{}. Its 5/14-second outputs occupy one window, characterizing within-window generation; Section~\ref{sec:ablations} evaluates the separate continuation mechanisms. The comparison retains each configuration's original weights, sampling, spatial canvas, and temporal rate.

\begin{table}[htbp]
\centering
\caption{Generation configurations for VBench.}
\label{tab:vbench_protocol}
{\small\setlength{\tabcolsep}{3.5pt}
\begin{adjustbox}{max width=\linewidth}
\begin{tabular}{@{}lrrrl@{}}
\toprule
Configuration & Canvas & FPS & Frames (5/14 s) & Sampling \\
\midrule
H3 50-step & $960\times544$ & 24 & 120/336 & 50 steps, CFG 1 \\
H3/VDN DMD8 & $960\times544$ & 24 & 120/336 & 8 NFE, CFG 1 \\
LongVie2 & $640\times352$ & 16 & 81/225 & 50 steps, CFG 5 \\
LongVie AR2 & $640\times352$ & 16 & 81/225 & 2 steps per latent, CFG 1 \\
\bottomrule
\end{tabular}
\end{adjustbox}}
\end{table}

All configurations use reference RGB and depth; LongVie additionally receives SpaTracker trajectories. The H3 pair shares input caches and seed policy, with different generator weights and attention implementations. The DMD8 checkpoint has zero local joint-training updates. H3 uses 124/345 internal frames before trimming. LongVie2 uses 81-frame windows with one-frame overlap, and its 5-second output is the first window of the 14-second run. LongVie AR2 uses a finite 12-latent window with eight sink latents. Preserving the LongVie endpoint gives encoded durations of 5.0625/14.0625 seconds. The results describe these complete configurations; their native resolutions, frame rates, and control inputs are part of the comparison.

\paragraph{Official evaluation and aggregation.}
We use VBench 0.1.5~\citep{huang2023vbench} with its official custom-video evaluation pipeline. Each metric reads the original video through its official loader. DINO and CLIP use the official frame transforms; MUSIQ uses the longer-side resizing policy; RAFT samples at approximately 8 fps; AMT-S uses original adjacent-frame intervals. Pixel-change flickering scores on these moving scenes are interpreted jointly with dynamic degree, which records detected movement.

For dimension score $s_d$, official bounds $m_d,M_d$, and weight $w_d$, the reported Quality Score is
\begin{equation}
 Q=\frac{1}{6.5}\sum_{d=1}^{7}w_d\frac{s_d-m_d}{M_d-m_d},
 \qquad w_{\mathrm{dynamic}}=0.5,\quad w_d=1\ \text{otherwise}.
 \label{eq:vbench_quality}
\end{equation}
Normalized values are retained without clipping. The bounds for subject consistency, background consistency, flickering, and smoothness are respectively $[0.1462,1]$, $[0.2615,1]$, $[0.6293,1]$, and $[0.706,0.9975]$; the other dimensions use $[0,1]$. Raw MUSIQ values are divided by 100 once. Equal-video dimension means are aggregated with equation~\ref{eq:vbench_quality}, and all table values are multiplied by 100. This is the quality component on our custom source collection; source fidelity and depth agreement are evaluated separately in Tables~\ref{tab:quality} and~\ref{tab:native}.

For the combined results in Table~\ref{tab:vbench}, we average the two duration scores within each source, then average over the 100 sources with equal weight. With balanced counts, this equals the mean over all 200 videos per configuration and gives each duration half the total weight. Table~\ref{tab:vbench_by_duration} retains the original duration-level means.

\begin{table}[htbp]
\centering
\caption{VBench quality by duration ($\times100$, $\uparrow$); best in bold.}
\label{tab:vbench_by_duration}
{\scriptsize\setlength{\tabcolsep}{3pt}%
% ===== Begin inlined: tables/vbench_by_duration.tex =====
% Generated by scripts/update_vbench_tables.py; do not edit numerical rows.
\begin{adjustbox}{max width=\linewidth}
\begin{tabular}{@{}lrrrrrrrr@{}}
\toprule
Configuration & \shortstack{Quality\\Score} & \shortstack{Subject\\cons.} & \shortstack{Background\\cons.} & \shortstack{Temporal\\flickering} & \shortstack{Motion\\smoothness} & \shortstack{Dynamic\\degree} & \shortstack{Aesthetic\\quality} & \shortstack{Imaging\\quality} \\
\midrule
\multicolumn{9}{@{}l}{\textit{5 s; $n=100$ per configuration}} \\
H3 50-step & 80.40 & \textbf{89.38} & 93.78 & \textbf{96.92} & 98.60 & 84.00 & 52.77 & 60.93 \\
LongVie2 & 79.12 & 88.64 & 93.59 & 96.45 & 97.83 & 86.00 & 50.57 & 58.83 \\
LongVie AR2 & 79.04 & 87.37 & 92.65 & 96.64 & 97.84 & \textbf{88.00} & 50.20 & 59.92 \\
H3/VDN DMD8 & \textbf{80.89} & 88.93 & \textbf{93.82} & 96.74 & \textbf{98.75} & 86.00 & \textbf{54.06} & \textbf{62.31} \\
\midrule
\multicolumn{9}{@{}l}{\textit{14 s; $n=100$ per configuration}} \\
H3 50-step & 79.34 & \textbf{85.55} & 92.10 & 96.53 & 98.34 & 83.00 & 52.42 & 63.64 \\
LongVie2 & 77.96 & 85.17 & 92.13 & 96.47 & 97.82 & 84.00 & 50.43 & 58.45 \\
LongVie AR2 & 78.19 & 85.19 & 91.73 & \textbf{96.62} & 97.84 & 84.00 & 49.92 & 60.56 \\
H3/VDN DMD8 & \textbf{80.18} & 85.48 & \textbf{92.37} & 96.35 & \textbf{98.60} & \textbf{85.00} & \textbf{55.08} & \textbf{64.71} \\
\bottomrule
\end{tabular}
\end{adjustbox}
% ===== End inlined: tables/vbench_by_duration.tex =====
}
\end{table}

\paragraph{Paired uncertainty.}
Tables~\ref{tab:vbench_paired_consistency} and~\ref{tab:vbench_paired_visual} report DMD8 minus each comparator on the same sources. We resample source groups within each category, retain fixed category weights and all model/metric pairings, and evaluate each duration separately. There are 10,000 bootstrap replicates with seed 20260925. Brackets are marginal 95\% percentile intervals without multiplicity adjustment. One fixed output per source/configuration contributes to each duration. All six Quality Score intervals are positive; Tables~\ref{tab:vbench_categories_real} and~\ref{tab:vbench_categories_rendered} retain the full category-level means.

\begin{table}[htbp]
\centering
\caption{Paired differences (DMD8 $-$ comparator, $\times100$); brackets: 95\% intervals.}
\label{tab:vbench_paired_consistency}
{\small\setlength{\tabcolsep}{4pt}%
% ===== Begin inlined: tables/vbench_paired_consistency.tex =====
% Generated by scripts/update_vbench_tables.py; do not edit numerical rows.
\begin{adjustbox}{max width=\linewidth}
\begin{tabular}{@{}lrrrr@{}}
\toprule
Configuration & \shortstack{Quality\\Score} & \shortstack{Subject\\cons.} & \shortstack{Background\\cons.} & \shortstack{Temporal\\flickering} \\
\midrule
\multicolumn{5}{@{}l}{\textit{5 s: DMD8 minus comparator; $n=100$ pairs}} \\
H3 50-step & \shortstack{$+0.49$\\$[+0.15,+0.85]$} & \shortstack{$-0.45$\\$[-0.81,-0.12]$} & \shortstack{$+0.04$\\$[-0.16,+0.25]$} & \shortstack{$-0.19$\\$[-0.29,-0.08]$} \\
LongVie2 & \shortstack{$+1.78$\\$[+1.39,+2.16]$} & \shortstack{$+0.29$\\$[-0.25,+0.78]$} & \shortstack{$+0.23$\\$[-0.02,+0.48]$} & \shortstack{$+0.29$\\$[+0.11,+0.45]$} \\
LongVie AR2 & \shortstack{$+1.85$\\$[+1.42,+2.26]$} & \shortstack{$+1.55$\\$[+0.96,+2.09]$} & \shortstack{$+1.17$\\$[+0.84,+1.50]$} & \shortstack{$+0.10$\\$[-0.05,+0.25]$} \\
\midrule
\multicolumn{5}{@{}l}{\textit{14 s: DMD8 minus comparator; $n=100$ pairs}} \\
H3 50-step & \shortstack{$+0.84$\\$[+0.48,+1.21]$} & \shortstack{$-0.07$\\$[-0.74,+0.57]$} & \shortstack{$+0.28$\\$[-0.19,+0.76]$} & \shortstack{$-0.18$\\$[-0.27,-0.09]$} \\
LongVie2 & \shortstack{$+2.22$\\$[+1.84,+2.61]$} & \shortstack{$+0.31$\\$[-0.40,+1.01]$} & \shortstack{$+0.24$\\$[-0.17,+0.65]$} & \shortstack{$-0.13$\\$[-0.36,+0.08]$} \\
LongVie AR2 & \shortstack{$+1.98$\\$[+1.55,+2.44]$} & \shortstack{$+0.29$\\$[-0.39,+0.95]$} & \shortstack{$+0.64$\\$[+0.23,+1.05]$} & \shortstack{$-0.27$\\$[-0.50,-0.05]$} \\
\bottomrule
\end{tabular}
\end{adjustbox}
% ===== End inlined: tables/vbench_paired_consistency.tex =====
}
\end{table}

\begin{table}[htbp]
\centering
\caption{Remaining paired VBench dimensions; notation as in Table~\ref{tab:vbench_paired_consistency}.}
\label{tab:vbench_paired_visual}
{\small\setlength{\tabcolsep}{4pt}%
% ===== Begin inlined: tables/vbench_paired_visual.tex =====
% Generated by scripts/update_vbench_tables.py; do not edit numerical rows.
\begin{adjustbox}{max width=\linewidth}
\begin{tabular}{@{}lrrrr@{}}
\toprule
Configuration & \shortstack{Motion\\smoothness} & \shortstack{Dynamic\\degree} & \shortstack{Aesthetic\\quality} & \shortstack{Imaging\\quality} \\
\midrule
\multicolumn{5}{@{}l}{\textit{5 s: DMD8 minus comparator; $n=100$ pairs}} \\
H3 50-step & \shortstack{$+0.15$\\$[+0.09,+0.22]$} & \shortstack{$+2.00$\\$[-1.45,+6.00]$} & \shortstack{$+1.29$\\$[+0.81,+1.77]$} & \shortstack{$+1.38$\\$[+0.62,+2.26]$} \\
LongVie2 & \shortstack{$+0.92$\\$[+0.72,+1.14]$} & \shortstack{$+0.00$\\$[-3.00,+3.00]$} & \shortstack{$+3.49$\\$[+2.68,+4.28]$} & \shortstack{$+3.48$\\$[+2.05,+4.95]$} \\
LongVie AR2 & \shortstack{$+0.91$\\$[+0.73,+1.12]$} & \shortstack{$-2.00$\\$[-6.00,+2.00]$} & \shortstack{$+3.86$\\$[+3.11,+4.59]$} & \shortstack{$+2.39$\\$[+1.30,+3.51]$} \\
\midrule
\multicolumn{5}{@{}l}{\textit{14 s: DMD8 minus comparator; $n=100$ pairs}} \\
H3 50-step & \shortstack{$+0.26$\\$[+0.18,+0.35]$} & \shortstack{$+2.00$\\$[+0.00,+5.00]$} & \shortstack{$+2.67$\\$[+1.84,+3.50]$} & \shortstack{$+1.07$\\$[+0.29,+1.89]$} \\
LongVie2 & \shortstack{$+0.78$\\$[+0.61,+0.96]$} & \shortstack{$+1.00$\\$[+0.00,+3.00]$} & \shortstack{$+4.65$\\$[+3.77,+5.49]$} & \shortstack{$+6.26$\\$[+4.85,+7.79]$} \\
LongVie AR2 & \shortstack{$+0.76$\\$[+0.60,+0.94]$} & \shortstack{$+1.00$\\$[-2.00,+4.00]$} & \shortstack{$+5.16$\\$[+4.20,+6.09]$} & \shortstack{$+4.15$\\$[+2.89,+5.46]$} \\
\bottomrule
\end{tabular}
\end{adjustbox}
% ===== End inlined: tables/vbench_paired_visual.tex =====
}
\end{table}

\begin{table}[htbp]
\centering
\caption{VBench quality for driving and human activity ($\times100$, $\uparrow$); best in bold.}
\label{tab:vbench_categories_real}
{\scriptsize\setlength{\tabcolsep}{3pt}%
% ===== Begin inlined: tables/vbench_categories_real.tex =====
% Generated by scripts/update_vbench_tables.py; do not edit numerical rows.
\begin{adjustbox}{max width=\linewidth}
\begin{tabular}{@{}lrrrrrrrr@{}}
\toprule
Configuration & \shortstack{Quality\\Score} & \shortstack{Subject\\cons.} & \shortstack{Background\\cons.} & \shortstack{Temporal\\flickering} & \shortstack{Motion\\smoothness} & \shortstack{Dynamic\\degree} & \shortstack{Aesthetic\\quality} & \shortstack{Imaging\\quality} \\
\midrule
\multicolumn{9}{@{}l}{\textit{Driving: 5 s; $n=25$ per configuration}} \\
H3 50-step & 79.34 & \textbf{93.94} & 96.14 & \textbf{98.39} & 99.12 & 76.00 & 44.42 & 52.12 \\
LongVie2 & 79.29 & 93.20 & 96.02 & 97.72 & 98.81 & \textbf{84.00} & 43.58 & 52.53 \\
LongVie AR2 & 79.02 & 90.72 & 94.62 & 97.74 & 98.87 & \textbf{84.00} & 45.05 & 53.87 \\
H3/VDN DMD8 & \textbf{80.31} & 93.91 & \textbf{96.37} & 98.20 & \textbf{99.16} & 80.00 & \textbf{46.36} & \textbf{54.56} \\
\midrule
\multicolumn{9}{@{}l}{\textit{Driving: 14 s; $n=25$ per configuration}} \\
H3 50-step & 78.93 & \textbf{90.71} & 94.76 & \textbf{98.21} & 99.06 & 80.00 & 44.00 & 54.25 \\
LongVie2 & 78.43 & 90.61 & 94.66 & 97.59 & 98.71 & 84.00 & 44.42 & 51.69 \\
LongVie AR2 & 78.12 & 88.72 & 93.41 & 97.67 & 98.80 & 80.00 & 44.43 & 55.02 \\
H3/VDN DMD8 & \textbf{80.22} & 90.26 & \textbf{94.92} & 97.99 & \textbf{99.09} & \textbf{88.00} & \textbf{47.06} & \textbf{56.32} \\
\midrule
\multicolumn{9}{@{}l}{\textit{Human activity: 5 s; $n=25$ per configuration}} \\
H3 50-step & 81.11 & \textbf{93.34} & 94.47 & 98.52 & 99.25 & 60.00 & 54.36 & 63.83 \\
LongVie2 & 80.21 & 93.26 & \textbf{94.72} & 98.03 & 98.82 & 64.00 & 52.67 & 60.24 \\
LongVie AR2 & 80.50 & 92.54 & 94.12 & 97.96 & 98.78 & \textbf{68.00} & 52.34 & 62.41 \\
H3/VDN DMD8 & \textbf{81.95} & 92.86 & 94.43 & \textbf{98.53} & \textbf{99.29} & \textbf{68.00} & \textbf{55.18} & \textbf{64.95} \\
\midrule
\multicolumn{9}{@{}l}{\textit{Human activity: 14 s; $n=25$ per configuration}} \\
H3 50-step & 80.45 & 91.00 & 92.77 & 98.23 & 99.07 & 60.00 & 54.70 & 65.71 \\
LongVie2 & 79.49 & \textbf{91.60} & \textbf{93.99} & 98.13 & 98.87 & 60.00 & 52.89 & 59.81 \\
LongVie AR2 & 79.79 & 91.56 & 93.53 & 97.96 & 98.76 & \textbf{64.00} & 51.69 & 62.51 \\
H3/VDN DMD8 & \textbf{80.96} & 90.70 & 93.31 & \textbf{98.24} & \textbf{99.20} & 60.00 & \textbf{56.34} & \textbf{66.50} \\
\bottomrule
\end{tabular}
\end{adjustbox}
% ===== End inlined: tables/vbench_categories_real.tex =====
}
\end{table}

\begin{table}[htbp]
\centering
\caption{Simulation and synthetic VBench quality; notation as in Table~\ref{tab:vbench_categories_real}.}
\label{tab:vbench_categories_rendered}
{\scriptsize\setlength{\tabcolsep}{3pt}%
% ===== Begin inlined: tables/vbench_categories_rendered.tex =====
% Generated by scripts/update_vbench_tables.py; do not edit numerical rows.
\begin{adjustbox}{max width=\linewidth}
\begin{tabular}{@{}lrrrrrrrr@{}}
\toprule
Configuration & \shortstack{Quality\\Score} & \shortstack{Subject\\cons.} & \shortstack{Background\\cons.} & \shortstack{Temporal\\flickering} & \shortstack{Motion\\smoothness} & \shortstack{Dynamic\\degree} & \shortstack{Aesthetic\\quality} & \shortstack{Imaging\\quality} \\
\midrule
\multicolumn{9}{@{}l}{\textit{Simulation: 5 s; $n=25$ per configuration}} \\
H3 50-step & 78.94 & \textbf{82.29} & \textbf{91.15} & 95.71 & 97.91 & \textbf{100.00} & 51.03 & 62.68 \\
LongVie2 & 76.72 & 81.07 & 90.00 & 95.23 & 96.33 & 96.00 & 48.53 & 62.46 \\
LongVie AR2 & 77.04 & 79.94 & 89.39 & \textbf{96.00} & 96.76 & \textbf{100.00} & 47.74 & 61.97 \\
H3/VDN DMD8 & \textbf{79.28} & 82.20 & 91.05 & 95.70 & \textbf{98.31} & 96.00 & \textbf{52.97} & \textbf{63.83} \\
\midrule
\multicolumn{9}{@{}l}{\textit{Simulation: 14 s; $n=25$ per configuration}} \\
H3 50-step & 77.39 & 77.65 & \textbf{89.25} & 95.14 & 97.53 & \textbf{92.00} & 51.76 & 66.71 \\
LongVie2 & 75.45 & 76.92 & 88.63 & 95.35 & 96.42 & \textbf{92.00} & 48.18 & 62.62 \\
LongVie AR2 & 76.08 & 77.65 & 88.67 & \textbf{95.95} & 96.81 & \textbf{92.00} & 47.80 & 63.25 \\
H3/VDN DMD8 & \textbf{78.15} & \textbf{77.68} & 88.94 & 95.03 & \textbf{98.07} & \textbf{92.00} & \textbf{54.21} & \textbf{68.07} \\
\midrule
\multicolumn{9}{@{}l}{\textit{Synthetic: 5 s; $n=25$ per configuration}} \\
H3 50-step & \textbf{82.20} & \textbf{87.93} & 93.34 & \textbf{95.07} & 98.11 & \textbf{100.00} & 61.28 & 65.08 \\
LongVie2 & 80.24 & 87.02 & \textbf{93.61} & 94.82 & 97.37 & \textbf{100.00} & 57.48 & 60.10 \\
LongVie AR2 & 79.61 & 86.30 & 92.46 & 94.86 & 96.97 & \textbf{100.00} & 55.69 & 61.42 \\
H3/VDN DMD8 & 82.03 & 86.75 & 93.42 & 94.52 & \textbf{98.24} & \textbf{100.00} & \textbf{61.73} & \textbf{65.90} \\
\midrule
\multicolumn{9}{@{}l}{\textit{Synthetic: 14 s; $n=25$ per configuration}} \\
H3 50-step & 80.59 & 82.85 & 91.61 & 94.52 & 97.69 & \textbf{100.00} & 59.21 & 67.88 \\
LongVie2 & 78.47 & 81.55 & 91.26 & 94.81 & 97.29 & \textbf{100.00} & 56.23 & 59.69 \\
LongVie AR2 & 78.78 & 82.85 & 91.30 & \textbf{94.89} & 96.98 & \textbf{100.00} & 55.78 & 61.47 \\
H3/VDN DMD8 & \textbf{81.38} & \textbf{83.28} & \textbf{92.32} & 94.12 & \textbf{98.05} & \textbf{100.00} & \textbf{62.72} & \textbf{67.94} \\
\bottomrule
\end{tabular}
\end{adjustbox}
% ===== End inlined: tables/vbench_categories_rendered.tex =====
}
\end{table}
% ===== End inlined: appendix_vbench.tex =====

\FloatBarrier
\section{Runtime, Resources, and Applications}
\label{app:runtime}

\subsection{Controlled DiT Timing}
Primary timings cover synchronized model forward passes, including the H3/VDN backbone, Union branch, and FSDP/Ulysses communication. Each rank synchronizes local CUDA before and after each forward. We sum the maximum rank time at each of eight steps per window. Packing, Euler updates, history projection, depth preparation, VAE operations, validation, and output are excluded. Four fixed sources per canvas receive a full-window warmup and three hot repeats on isolated eight-H100 workers. The 32 total requests include warmups; reported hot means use 12 requests per canvas.

\begin{table}[htbp]
\centering
\caption{DiT timing by window, means over 12 hot requests at each resolution. Full-continuation windows meet the 14-FPS throughput target at both resolutions. The last window delivers only 69 frames after overlap removal and padding trim. Step time is the mean of synchronized maximum-rank forward times.}
\label{tab:dit_windows}
{\small\setlength{\tabcolsep}{3.2pt}%
% ===== Begin inlined: tables/dit_windows.tex =====
% Generated by scripts/update_experiment_assets.py; do not edit numerical rows.
\begin{adjustbox}{max width=\linewidth}
\begin{tabular}{@{}llrrrrr@{}}
\toprule
Canvas & Window & New frames & DiT (s) & Step (s) & New FPS & RTF@24fps \\
\midrule
$960\times544$ & Initial & 345 & 21.144 & 2.643 & 16.346 & 1.471 \\
$960\times544$ & Full continuation & 306 & 21.175 & 2.647 & 14.472 & 1.661 \\
$960\times544$ & Cropped tail & 69 & 10.455 & 1.307 & 6.616 & 3.636 \\
$672\times384$ & Initial & 345 & 14.667 & 1.833 & 23.548 & 1.020 \\
$672\times384$ & Full continuation & 306 & 14.356 & 1.795 & 21.361 & 1.126 \\
$672\times384$ & Cropped tail & 69 & 8.571 & 1.071 & 8.072 & 2.981 \\
\bottomrule
\end{tabular}
\end{adjustbox}
% ===== End inlined: tables/dit_windows.tex =====
}
\end{table}

\begin{table}[htbp]
\centering
\caption{Whole-trajectory DiT throughput. Mean FPS averages each hot request's ratio. Pooled FPS divides all delivered frames by all elapsed DiT time; pooled steady FPS uses only full continuation windows.}
\label{tab:dit_whole}
{\small\setlength{\tabcolsep}{3pt}%
% ===== Begin inlined: tables/dit_whole.tex =====
% Generated by scripts/update_experiment_assets.py; do not edit numerical rows.
\begin{adjustbox}{max width=\linewidth}
\begin{tabular}{@{}lrrrrr@{}}
\toprule
Canvas & Total (s) & Mean FPS & RTF@24fps & Pooled FPS & Pooled steady FPS \\
\midrule
$960\times544$ & 52.774 & 13.665 & 1.759 & 13.643 & 14.451 \\
$672\times384$ & 37.594 & 19.183 & 1.253 & 19.152 & 21.315 \\
\bottomrule
\end{tabular}
\end{adjustbox}
% ===== End inlined: tables/dit_whole.tex =====
}
\end{table}

The 345/306/69 new-frame counts sum to the 720-frame delivery in the 24-FPS evaluation stream. A complete continuation has 12.75 seconds of evaluation playback and a 1.59375-second average forward budget at eight evaluations. The $672\times384$ and $960\times544$ full-continuation paths reach 21.361 and 14.472 new FPS, respectively, meeting the 14-FPS throughput target at both resolutions. Timing runs reproduce the final latents of the corresponding requests.

\subsection{Measured Depth-State and Model Resources}
\label{app:resources}
Resource probes use one warmup and one measured full first window per canvas, with eight actual forwards on each of eight ranks. Depth rows, carrier, and Union hints are measured from tensors. Local parameter storage and allocator peaks are measured separately. Incremental peaks include temporary activations and FSDP gather buffers; these components cannot be additively separated from this probe. The video VAE is not resident in the probe, so these measurements describe the measured model and conditioning state rather than a full-renderer peak.

\begin{table}[htbp]
\centering
\caption{Actual first-window tensors and allocator increments. Hints are per rank; model storage and allocation increase take the maximum rank. Units distinguish MiB tensors from GiB model/allocator quantities.}
\label{tab:measured_resources}
{\small\setlength{\tabcolsep}{3.5pt}%
% ===== Begin inlined: tables/resources.tex =====
% Generated by scripts/update_experiment_assets.py; do not edit numerical rows.
\begin{adjustbox}{max width=\linewidth}
\begin{tabular}{@{}lrrrrr@{}}
\toprule
Canvas & Depth (MiB) & Carrier (MiB) & Hints (MiB) & Model (GiB) & Increase (GiB) \\
\midrule
$960\times544$ & 19.050 & 39.276 & 347.710 & 9.025 & 7.034 \\
$672\times384$ & 9.413 & 19.407 & 177.393 & 9.025 & 5.090 \\
\bottomrule
\end{tabular}
\end{adjustbox}
% ===== End inlined: tables/resources.tex =====
}
\end{table}

\begin{figure}[t]
\centering
\includegraphics[width=\linewidth]{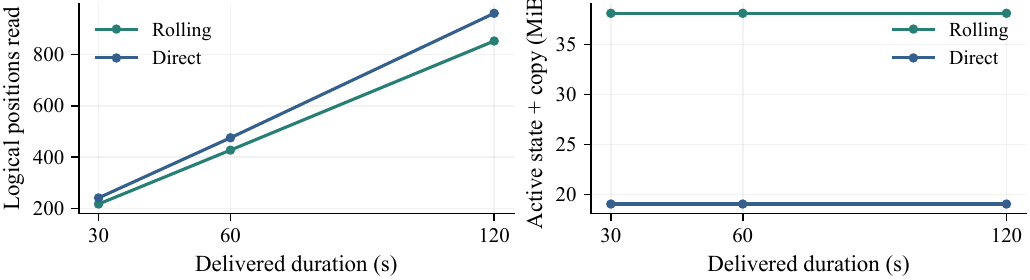}
\caption{Logical depth reads and retained state plus consumer copy at $960\times544$. The resident global source is additional and grows with duration. Rolling reduces read positions but retains an extra owned state tensor.}
\label{fig:depth_resources}
\end{figure}

The CPU benchmark uses actual FP32 depth, two PyTorch threads, one warmup, and 100 measurements per case, with the same validation/copy semantics for both readers. Four 30-second sources at each canvas and two longer geometries give ten cases. Table~\ref{tab:state_runtime} averages per-source switch means and, separately, per-source p95 values; the latter is not a pooled p95 and differs from the DiT-window statistic.

\begin{table}[htbp]
\centering
\caption{Resident CPU source switching, four 30-second sources per canvas, 100 repeats per source. State + copy excludes the global source.}
\label{tab:state_runtime}
{\small\setlength{\tabcolsep}{3.5pt}%
% ===== Begin inlined: tables/state_runtime.tex =====
% Generated by scripts/update_experiment_assets.py; do not edit numerical rows.
\begin{adjustbox}{max width=\linewidth}
\begin{tabular}{@{}llrrr@{}}
\toprule
Canvas & Reader & Mean switch (ms) & Mean source p95 (ms) & State + copy (MiB) \\
\midrule
$960\times544$ & Rolling & 11.836 & 18.088 & 38.101 \\
$960\times544$ & Direct & 7.377 & 11.509 & 19.050 \\
$672\times384$ & Rolling & 3.502 & 6.026 & 18.826 \\
$672\times384$ & Direct & 2.807 & 4.253 & 9.413 \\
\bottomrule
\end{tabular}
\end{adjustbox}
% ===== End inlined: tables/state_runtime.tex =====
}
\end{table}

At $960\times544$, rolling requests 217/427/852 unique positions for 30/60/120-second deliveries; direct reads request 241/475/960. These counts measure logical source requests. Rolling concatenates retained overlap, while direct slicing accesses the resident source; Table~\ref{tab:state_runtime} records the corresponding switch costs. Global depth, global noise, and committed outputs scale with duration and are accounted for separately. For 30 seconds at $960\times544$, global depth and noise occupy approximately 40.529 and 41.015 MiB, additional to active state and the carrier.

\subsection{Long-Horizon Results}
\begin{table}[htbp]
\centering
\caption{Long-video aggregates. The 17 sixty-second sources use existing validation/test splits; the thirteen 120-second sources use an existing training split and are stress tests. Source sets differ.}
\label{tab:long_quality}
{\small\setlength{\tabcolsep}{3.8pt}%
% ===== Begin inlined: tables/long_quality.tex =====
% Generated by scripts/update_experiment_assets.py; do not edit numerical rows.
\begin{adjustbox}{max width=\linewidth}
\begin{tabular}{@{}rrrrrrrr@{}}
\toprule
Seconds & $n$ & PSNR$\uparrow$ & SSIM$\uparrow$ & LPIPS$\downarrow$ & Rank$\uparrow$ & Edge$\uparrow$ & NMAE$\downarrow$ \\
\midrule
60 & 17 & 14.4542 & 0.3198 & 0.4903 & 0.9114 & 0.6928 & 0.0999 \\
120 & 13 & 14.4277 & 0.3358 & 0.5464 & 0.9353 & 0.6809 & 0.1771 \\
\bottomrule
\end{tabular}
\end{adjustbox}
% ===== End inlined: tables/long_quality.tex =====
}
\end{table}

The 60-second PSNR confidence interval is $[13.5573,15.3446]$ dB; its LPIPS interval is $[0.43966,0.54186]$. For 120 seconds these intervals are $[13.4260,15.4057]$ dB and $[0.50169,0.59344]$. Fixed-scale NMAE is 0.09993 ($[0.06854,0.13587]$) and 0.17714 ($[0.05376,0.38348]$), respectively.

Over the first to last ten-second block, 60-second PSNR changes from 18.257 to 13.136 dB, and fixed-scale NMAE from 0.0580 to 0.1428. The 120-second values change from 18.665 to 13.307 dB and from 0.0903 to 0.2248. Figure~\ref{fig:long_horizon} characterizes these time-dependent measurements with source-group confidence bands while keeping geometric calibration fixed throughout each trajectory.

\subsection{On-Demand Geometry, Appearance, and Control}
\label{app:online}
Three 30-second, $672\times384$ procedural trajectories share one small PyBullet room. Warm/cool conditions use the same camera path and geometry with different reference materials; their depth is bitwise identical. The camera returns to its initial viewpoint at frame 480 (20 seconds). The control branch is enabled only when requesting the second window, with changes beginning at frame 357; the initial request has already read geometry through frame 356.

The cached cases run before the online cases; the first cached warm case may include cold compilation. Online execution requests new engine depth, encodes canonical blocks, and delivers RGB intervals $[0,340)$, $[340,646)$, and $[646,720)$. Every newly acquired depth interval matches the cached oracle, and all three final video-latent outputs match bitwise. The recorded memory includes the retained global oracle, noise, and output latents.

\begin{table}[htbp]
\centering
\caption{Appearance, native inverse-depth agreement, and revisit error for the three online conditions. Revisit columns compare frame 480 with frame zero using LPIPS; smaller is better. These are individual cases, not replicated population estimates.}
\label{tab:applications}
{\small\setlength{\tabcolsep}{3.3pt}%
% ===== Begin inlined: tables/applications.tex =====
% Generated by scripts/update_experiment_assets.py; do not edit numerical rows.
\begin{adjustbox}{max width=\linewidth}
\begin{tabular}{@{}lrrrrrr@{}}
\toprule
Condition & LPIPS$\downarrow$ & Rank$\uparrow$ & Edge$\uparrow$ & NMAE$\downarrow$ & Revisit pred. & Revisit GT \\
\midrule
Warm & 0.4558 & 0.8915 & 0.5280 & 0.1170 & 0.4769 & 0.0056 \\
Cool & 0.2473 & 0.9115 & 0.5686 & 0.0986 & 0.2635 & 0.0064 \\
Control & 0.4561 & 0.8909 & 0.5253 & 0.1170 & 0.4668 & 0.0118 \\
\bottomrule
\end{tabular}
\end{adjustbox}
% ===== End inlined: tables/applications.tex =====
}
\end{table}

Warm/cool generated RGB differs by a mean 0.19322 on $[0,1]$, versus 0.14456 for engine images; mean palette-change error is 0.03171. The control pair's initial-window latents are identical, while suffix latent MAE is 0.039076. The shared generator therefore responds to appearance and geometric-branch changes while preserving committed history. Table~\ref{tab:applications} reports absolute appearance, geometry, and revisit measurements for each case.

\begin{figure}[!t]
\centering
\includegraphics[width=\linewidth]{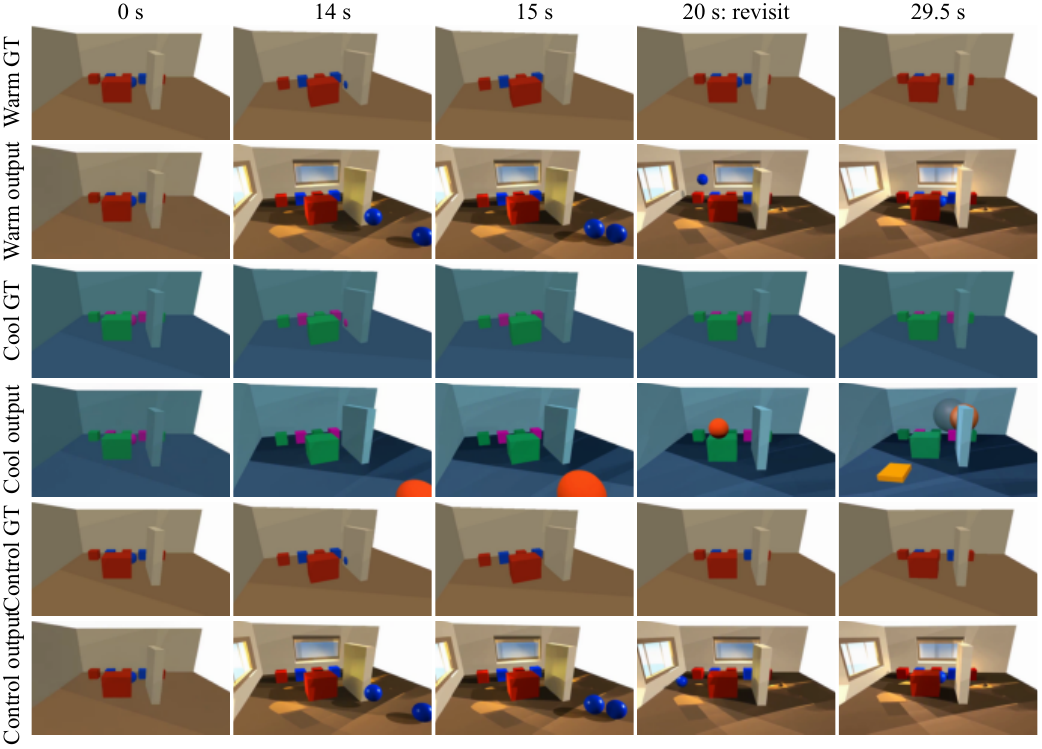}
\caption{\textbf{One shared renderer for appearance and geometric control.} Engine RGB and generated frames for warm/cool references and a control branch, at fixed times. Warm/cool conditions share depth; control changes begin at frame 357 and the camera revisits at 20 seconds.}
\label{fig:online}
\end{figure}

\subsection{Window and Overlap Configurations}
\label{app:overlap_results}
Four 30-second scenes---architecture, coast, gym, and office---compare overlaps 39/22/5 at $672\times384$. The corresponding windows are 345/328/311 frames with 12/7/2 history latents, all with stride 306. First-window content and commitment points change, unlike the fixed-window comparisons above.

\begin{table}[htbp]
\centering
\caption{Joint window-and-overlap configurations. PSNR averages per-frame dB; SSIM uses an $11\times11$ Gaussian window with $\sigma=1.5$. These definitions differ from the primary evaluation.}
\label{tab:overlap}
\begin{adjustbox}{max width=\linewidth}
\begin{tabular}{@{}rrrrrr@{}}
\toprule
Overlap & Window & History & Full PSNR & Full SSIM & Common-cont. PSNR \\
\midrule
39 & 345 & 12 & 13.842738 & 0.341540 & 12.788645 \\
22 & 328 & 7 & 13.683256 & 0.332978 & 12.861136 \\
5 & 311 & 2 & 13.685815 & 0.336063 & 12.866656 \\
\bottomrule
\end{tabular}
\end{adjustbox}
\end{table}

The full-video ranking favors 39-frame overlap; the shared continuation $[345,720)$ slightly favors shorter overlaps. A separate two-scene 60-second comparison also showed no consistent PSNR/SSIM advantage from history. These are configuration observations, not isolated estimates of the history constraint.

\subsection{Qualitative Selection and Paired Visualizations}
\label{app:additional}
Figure~\ref{fig:examples} uses the first selected ablation source from each category, independently of quality scores. The same four sources appear in the paired seam figures. Each row shows the same absolute timestamps across methods, and the reference is the actual conditioning image.

The figures pair source RGB with identical-time generated observations, complementing the source-fidelity, geometric, and temporal metrics. The simulation and synthetic rows display layout and appearance across both continuation boundaries. The online figure further varies reference appearance and future geometry within one shared model.
% ===== End inlined: appendix_experiments.tex =====

% ===== End inlined: appendices.tex =====

%
% ===== Begin inlined: appendix_statements.tex =====
\FloatBarrier
\section{AI Use and Reproducibility Statements}
\label{app:statements}

\subsection{AI use statement}
Generative AI assisted manuscript organization, drafting, editing, equation and pseudocode formatting, and preparation of tables and figures. Quantitative results and qualitative examples derive from completed model runs; method diagrams are illustrations. The authors remain responsible for the experimental results and final manuscript.

\subsection{Reproducibility statement}
Equations~\ref{eq:projection}--\ref{eq:fixed_depth} and Algorithm~\ref{alg:uniworld} specify inference. Appendices~\ref{app:proofs}--\ref{app:evaluation} provide state proofs, model components, VAE phase, depth conventions, valid masks, source-selection rules, paired controls, and bootstrap statistics. Appendix~\ref{app:runtime} specifies the hardware, warmup and repetition settings, and measurement scopes for DiT and resource measurements; the online subsection reports source-interface behavior and control equivalence. The supplementary tables report aggregate results and uncertainty under these protocols.
% ===== End inlined: appendix_statements.tex =====